\definecolor{mutedblue}{rgb}{0.2, 0.3, 0.8}
\newtheorem*{proposition*}{Proposition}
\title{Actor-Free Continuous Control via\\Structurally Maximizable Q-Functions}
\author{
Yigit Korkmaz\textsuperscript{1 *} \:\:\:\:
Urvi Bhuwania\textsuperscript{1 *} \:\:\:\:
Ayush Jain\textsuperscript{1, 2 $\dagger$} \:\:\:\:
Erdem B\i y\i k\textsuperscript{1 $\dagger$}\\
\textsuperscript{1}University of Southern California, \textsuperscript{2}Meta AI \textsuperscript{$\ddagger$} \\
}
\begin{document}

\definecolor{blue}{HTML}{4E79A7}
\definecolor{orange}{HTML}{F28E2B}
\definecolor{purple}{HTML}{8E6C8A}
\definecolor{green}{HTML}{59A14F}
\definecolor{red}{HTML}{E15759}
\definecolor{yellow}{HTML}{EDC949}
\definecolor{gray}{HTML}{79706E}
\definecolor{pink}{HTML}{E28A9A}
\definecolor{brown}{HTML}{9C755F}

\newcommand{\td}{\textbf{\textcolor{blue}{TD3}}}
\newcommand{\sac}{\textbf{\textcolor{yellow}{SAC}}}
\newcommand{\ppo}{\textbf{\textcolor{yellow}{PPO}}}
\newcommand{\naf}{\textbf{\textcolor{purple}{NAF}}}
\newcommand{\wfq}{\textbf{\textcolor{green}{Wire-Fitting}}}
\newcommand{\rbf}{\textbf{\textcolor{yellow}{RBF-DQN}}}
\newcommand{\model}{\textbf{\textcolor{orange}{Q3C}}}

\newcommand{\aj}[1]{{\color{magenta}{#1}}}
\newcommand{\urvi}[1]{{\color{brown}{#1}}}

\maketitle

\def\thefootnote{*}\footnotetext{Equal Contribution, \space\space $^\dagger$Equal Advising, \space\space$^\ddagger$ Work performed in personal capacity outside of work at Meta.}
\begin{abstract}
Value-based algorithms are a cornerstone of off-policy reinforcement learning due to their simplicity and training stability. However, their use has traditionally been restricted to discrete action spaces, as they rely on estimating Q-values for individual state-action pairs. In continuous action spaces, evaluating the Q-value over the entire action space becomes computationally infeasible. To address this, actor-critic methods are typically employed, where a critic is trained on off-policy data to estimate Q-values, and an actor is trained to maximize the critic's output. Despite their popularity, these methods often suffer from instability during training. In this work, we propose a purely value-based framework for continuous control that revisits structural maximization of Q-functions, introducing a set of key architectural and algorithmic choices to enable efficient and stable learning. We evaluate the proposed actor-free Q-learning approach on a range of standard simulation tasks, demonstrating performance and sample-efficiency on par with state-of-the-art baselines, without the cost of learning a separate actor. Particularly, in environments with constrained action spaces, where the value functions are typically non-smooth, our method with structural maximization outperforms traditional actor-critic methods with gradient-based maximization. We have released our code at \href{https://github.com/USC-Lira/Q3C}{https://github.com/USC-Lira/Q3C}.
\end{abstract}

\section{Introduction}

Reinforcement learning (RL) has been highly effective in many domains, ranging from robotics to gaming and recommender systems \citep{kalashnikov2018QTOpt,levine2016end,silver2017mastering, afsar2022reinforcement}. Value-based RL methods such as Deep Q-Networks (DQNs) \citep{mnih2015human, hasselt2016deep} are widely used in discrete action spaces due to their simplicity, training stability, and sample-efficiency. However, these methods require maximization over the action space, which is feasible only in discrete action spaces, requiring alternative approaches for continuous control.

In problems with continuous action spaces, policy-based methods are typically used, such as DDPG \citep{lillicrap2016continuous}, SAC \citep{haarnoja2018soft}, and TD3 \citep{fujimoto2018addressing}. These approaches employ an actor-critic architecture, where the actor (policy) is updated based on feedback from the critic (Q-function) to select the optimal actions, and both components are usually parameterized by neural networks. While powerful, actor-critic methods often face instability due to the coupled training of actor and critic, hyperparameter sensitivity, and predicting in high-dimensional action spaces. Moreover, gradient-based actor-critic methods struggle when the action space is constrained, such as when actions require to be within specific safety bounds \citep{chow2018lyapunov, jain2025mitigating}, because the gradient-ascending actor can only find locally optimal actions.

Can we develop an actor-free value-based algorithm that can efficiently select optimal actions in continuous domains? In this work, we propose to extend the DQN framework to continuous action spaces via a Q-function representation that is structurally maximizable. This has been explored for quadratic models of the Q-function~\citep{gu2016continuousdq}, but that has limited representation capacity. In contrast, our approach builds upon a prior technique that uses a set of ``control-points'' to anchor Q-function approximation \citep{baird1993reinforcement, gaskett1999q}, such that the maximum is at one of the control-points. This direction was largely abandoned~\citep{van2007reinforcement} due to poor benchmark performance and scalability in complex environments. We revisit this direction with a series of critical design innovations that enable practical and effective continuous Q-learning for the first time with function approximation of control-points.

We propose the Q3C algorithm, Q-learning for continuous control with control-points, where our contributions include: (1) combining a structurally maximizable Q-function with deep learning to find maxima easily in complex continuous action spaces; (2) an improved model architecture that simplifies optimization by separating the complexity
of control-point generation and value estimation;
and (3) algorithmic improvements that help make the algorithm robust across different environments with various action spaces and reward scales.
We evaluate our approach on a range of continuous control Gymnasium tasks~\citep{towers2024gymnasium} where Q3C is on par with the performance of common deterministic actor-critic methods. Particularly, in complex environments with constrained action spaces, where gradient-based actor-critic methods struggle, Q3C consistently outperforms existing approaches. We also conduct ablation studies to measure and visualize the impact of each design component in Q3C.

\section{Related Work}
\label{related_works}
\textbf{Off-policy actor-critic methods} are widely employed for tasks with continuous action spaces. Among them, Deep Deterministic Policy Gradient (DDPG) \citep{lillicrap2016continuous} is particularly influential. As a deep variant of the deterministic policy gradient algorithm \citep{silver2014deterministic}, DDPG combines a Q-function estimator with a deterministic actor that seeks to maximize the estimated Q-function. However, the interaction between the actor network and the Q-function estimator often introduces instability, making DDPG highly sensitive to hyperparameters and difficult to stabilize.

Various improvements to address these shortcomings have been proposed, including multi-step returns \citep{horgan2018distributed}, prioritized replay buffers \citep{schaul2016prioritizedexperience}, and distributional critics \citep{bellemare2017distributional}. One notable extension, Twin Delayed Deep Deterministic Policy Gradient (TD3) \citep{fujimoto2018addressing}, introduces several key modifications, such as delayed updates for the policy, target networks, and the use of clipped double Q-learning, to significantly improve the stability and robustness of training, making TD3 one of the most widely adopted algorithms for continuous control. However, TD3 often performs suboptimally in environments with complex or non-convex Q-function landscapes. To address this, \citet{jain2025mitigating} propose augmenting TD3 with a successive actor framework that trains multiple actors to explore different modes of the Q-function. However, this approach introduces additional computational overhead and inference-time latency. Furthermore, the expressiveness of the actor's parameterization may still be insufficient to capture complex optimal action distributions.
Finally, Soft Actor-Critic (SAC) \citep{haarnoja2018soft} learns a stochastic policy that maximizes an entropy-regularized Q-function.

\textbf{Evolutionary algorithms} like simulated annealing~\citep{kirkpatrick1983optimization}, genetic algorithms~\citep{srinivas1994genetic}, tabu search~\citep{glover1990tabu}, and cross-entropy methods (CEM)~\citep{de2005tutorial} are deployed for global optimization in RL but often suffer from premature convergence at local optima in environments with complex Q-functions~\citep{hu2007model}. CEM approaches such as QT-Opt ~\citep{kalashnikov2018QTOpt,lee2023pi,kalashnikov2021mt}, GRAC~\citep{shao2022grac}, CEM-RL ~\citep{pourchot2018cem}, and Cross-Entropy Guided Policies~\citep{simmons2019q} are relatively effective but additionally introduce a high computational workload, struggling with high-dimensional action spaces as sampling becomes progressively more inefficient~\citep{yan2019learning}.

\textbf{Value-based methods}, as opposed to actor-critic methods, suffer from the challenge of finding the maximal action in continuous domains. Prior work approach this problem via various optimization techniques to minimize the Bellman residual \citep{baird1995residual}, including mixed-integer programming \citep{ryu2020caql}, the cross-entropy method \citep{kalashnikov2018QTOpt}, and gradient ascent \citep{lim2018actorexpert}. While these approaches can be effective, they often result in either suboptimal local maximization or are computationally impractical~\citep{jain2025mitigating}. Another line of work discretizes the action space and performs Q-weighted averaging over discrete actions \citep{millan2002continuous}. Though effective in low-dimensional settings, this approach does not scale well to high-dimensional tasks. \citet{gu2016continuousdq, wang2019quadratic} propose a different approach by constructing the state-action advantage function in quadratic form, allowing for analytical solutions to directly obtain the Q-value maximum. However, this formulation restricts the Q-function's expressivity to be quadratic in action space, restricting its practical applicability. \citet{amos2017input} make a similarly limiting assumption that the Q-function is universally convex in actions and use a convex solver for action selection. Most similar to our approach, \citet{asadi2021rbfdqn} proposed RBF-DQN, which learns Q-functions without an actor by using a deep network with a radial basis function (RBF) output layer, enabling easy identification of the maximum action-value. However, standard RBF interpolation does not guarantee that the maximum lies at a basis centroid. It has an implicit smoothing trade-off between how expressive the Q-function can be in modeling various local optima and how close a basis centroid is to the true maxima.
In contrast, the control-point approximation framework of Q3C alleviates this trade-off, being notably more adaptable to arbitrarily shaped Q-functions, including those with multiple modes or non-convexities. Thus, Q3C exhibits the ability to find the accurate maximizing action, which was the primary bottleneck of prior value-based methods.

Our work builds on a less known but promising framework that approximates Q-values using control-points, also referred to as \textbf{wire-fitting interpolators} \citep{baird1993reinforcement, gaskett1999q, gaskett2000reinforcement}. While earlier studies included this approach as a baseline, they reported poor or unstable performance \cite{van2007reinforcement, lim2018actorexpert}. In this work, we revisit this framework and propose several novel modifications that are crucial to stabilize learning, enabled by advances in deep learning and reinforcement learning. We demonstrate that, with these critical additions, control-point-based Q-function approximators become competitive with state-of-the-art reinforcement learning algorithms in standard benchmarking environments and outperform them on constrained action space environments.

\section{Preliminaries}
\label{preliminaries}

\subsection{Q-Learning}
\label{sec:rl}
We consider a Markov Decision Process (MDP), where at each time step $t$, an agent observes the state $s_t \in \mathcal{S}$ and takes an action $a_t \in \mathcal{A}$ in the environment $\mathcal{E}$. The agent receives a reward $r(s_t, a_t)$, and transitions to the next state $s_{t+1} \in S$.
The objective of the agent is to learn a deterministic policy $\pi: \mathcal{S} \rightarrow \mathcal{A}$ that maximizes the cumulative return discounted by a factor of $\gamma \in [0, 1)$ per time-step,
\(R_t = \sum_{i=t}^\infty\gamma^{(i-t)}r(s_i, a_i)\).
Q-learning~\citep{sutton1998reinforcement, mnih2013playing} solves this by defining the optimal action-value function $Q^*(s, a)$ as the maximum expected return achievable after taking action $a$ at state $s$,
\[
Q^*(s, a) = \max_{\pi} \; \mathbb{E} \left[ R_t \mid s_t = s, a_t = a, \pi \right] .
\]
The greedy policy is optimal, $\pi^* = \arg\max_{a \in \mathcal{A}} Q^*(s, a)$, where $Q^*$ follows the Bellman equation,
\begin{align}
Q^*(s, a) = \mathbb{E}_{s' \sim \mathcal{E}} \left[ r + \gamma \max_{a' \in \mathcal{A}} Q^*(s', a') \mid s, a \right].
\label{eq:bellman}
\end{align}
Value iteration algorithms like Deep Q-Networks (DQN)~\citep{mnih2013playing} apply the Bellman equation as an iterative update to train a function approximation $Q$ with weights $\theta$, called the Q-network,
\begin{align}
L_{\text{Bellman}}(\theta) = \mathbb{E}_{s,a \sim \rho} \left[ \left( r + \gamma \max_{a' \in \mathcal{A}} Q(s', a'; \theta) - Q(s, a; \theta) \right)^2 \right],
\label{eq:q_loss}
\end{align}
where $\rho(s, a)$ is the probability distribution over states and actions in the collected training data.

\subsection{Maximization in Q-Learning}
\label{sec:maximization}

The two $\max$ operations involved in deriving the greedy policy from Eq.~\eqref{eq:bellman} and evaluating the next state's best Q-value in Eq.~\eqref{eq:q_loss} are easily computable in discrete action spaces by evaluating the Q-values of every action~\citep{mnih2013playing, hessel2018rainbow, mnih2015human, hasselt2016deep}. However, they become infeasible in continuous action spaces. To address this, the deterministic policy gradient algorithm~\citep{silver2014deterministic, lillicrap2016continuous, fujimoto2018addressing} learns an ``actor'' policy in addition to the ``critic'' Q-function. The actor is trained with gradient ascent of the Q-function landscape, resulting in a greedy policy that finds locally optimal actions. Furthermore, the $\max$ in Eq.~\eqref{eq:q_loss} is avoided by replacing the max-Q formulation with the expected-Q of the actor~\citep{sutton1998reinforcement, silver2014deterministic, lillicrap2016continuous}.

However, the introduction of an additional actor brings several downsides: (i) additional hyperparameters for the actor network, making reproducibility a challenge due to interaction between the learning of the actor and critic~\citep{islam2017reproducibility}, (ii) computational overhead of increased training memory, and (iii) actors trained with the policy-gradient can only find locally optimal actions~\citep{jain2025mitigating}.

\section{Q3C: Q-learning for Continuous Control with Control-points}
\label{method}

We aim to simplify Q-learning in continuous action spaces by proposing a lightweight actor-free approach that enables the maximization required in Equations~\eqref{eq:bellman} and \eqref{eq:q_loss}. Our key insight is to learn a structurally maximizable representation of the Q-network that alleviates the need for an actor. Previously,~\citet{baird1993reinforcement} introduced \emph{wire-fitting}, a general function approximation system that uses a finite number of control-points for fitting the Q-function. This design reduces the problem of finding the maximum over the entire action space to finding the maximum among the control-points. However, the direct application of wire-fitting to deep neural networks has led to poor results~\citep{van2007reinforcement, lim2018actorexpert}. We identify the crucial challenges that have limited \emph{deep} wire-fitting and propose our approach, Q-learning for continuous control with control-points (Q3C), to address them.

\subsection{Function Approximation Using control-points}
\label{sec:function_approx}
The wire-fitting framework~\citep{baird1993reinforcement} facilitates finding the maximum of a function $f: \mathcal{U} \to \mathbb{R}$ by maintaining a set of $N$ control-points $\mathcal{U}_c = \{u_1, \dots u_N\}$ and their corresponding $f$ values $\mathcal{Y}_c = \{y_1 \dots y_N\}$. The value $f(u)$ is evaluated as inverse weighted interpolation smoothed with $c_i \geq 0$,
\begin{align}
\label{eq:func_max}
f(u) &= \frac{\sum_i y_i w_i}{\sum_i w_i}, \; \text{where} \;\; w_i = \frac{1}{| u - u_i |^2 + c_i \, (y_\text{max} - y_i)}.
\end{align}

\begin{wrapfigure}{r}{0.4\linewidth}  %
    \centering
    \vspace{-10pt}
    \includegraphics[width=\linewidth]{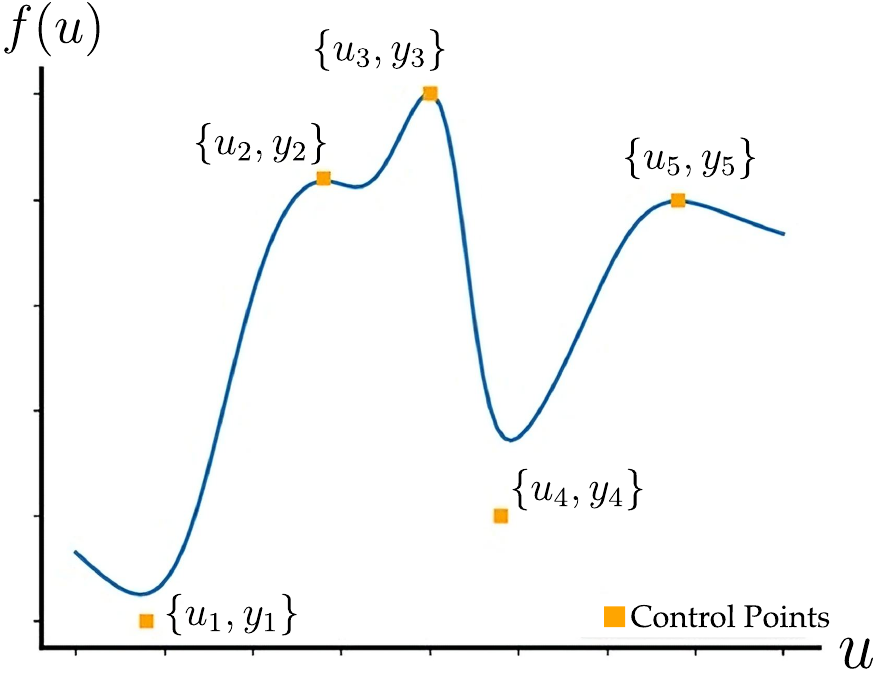}
    \vspace{-10pt}
    \caption{\textbf{Wire-fitting function.} A Q-function represented by an interpolation of learnable control-points is structurally maximized at one of the control-points.}
    \vspace{-15pt}
    \label{fig:func}
\end{wrapfigure}
As shown in Figure~\ref{fig:func}, this formulation guarantees that the maximum of the function $f$ is attained at one of the control-points, $u_i \in \mathcal{U}_c$ such that $i = \arg\max_k y_k$.

The smoothing parameter in our formulation affects the function's value only at non-maximal control-points. With small smoothing, the approximated function passes through all control-points, whereas with a large smoothing parameter, the function's value may differ slightly at some control-points. However, in all cases, it is still guaranteed that the maximum of the function lies at a control-point—an advantage in highly non-convex Q landscapes.

This function approximation is extended to model Q-functions $Q(s, a)$ with the goal of finding the maximizing action $a$ for any given state $s$.
Herein, the control-points and their values are learned as functions of $s$: $\mathcal{A}_c(s) = \{\hat{a}_1(s), \dots \hat{a}_N(s)\}$ and $\mathcal{Q}_c(s) = \{\hat{Q}_1(s), \dots \hat{Q}_N(s)\}$. These sets of functions $\mathcal{A}_c$ and $\mathcal{Q}_c$ are modeled with function approximations like neural networks, for example, a shared neural network backbone with $2N$ scalar output heads for $\hat{a}_i(s)$ and $\hat{Q}_i(s)$.
\begin{align}
Q(s,a) \!=\! \frac{\sum_i \hat Q_i(s) \,w_i(s, a)}{\sum_i w_i(s, a)},
\,\text{where}\,
w_i(s, a) \!=\! \frac{1}{\lvert a \!-\! \hat a_i(s)\rvert^2 \!+\! c_i \,\max_k\bigl(\hat Q_{\max}\!-\!\hat Q_i(s)\bigr)}.
\label{eq:q_wire_fitting}
\end{align}
This representation of the Q-function enables Q-learning in continuous action spaces with the ability to find the maximal action via a direct maximization over scalars $\hat{Q}_i(s)$,
\begin{align}
\label{eq:policy}
\arg\max_a Q(s,a) = \hat{a}_j \; \text{where} \; j = \arg\max_i \hat{Q}_i.
\end{align}

In \autoref{sec:proof}, we prove the following proposition that replacing a neural network Q-function with wire-fitting interpolation in the Q-function preserves its universal approximation ability.
\begin{proposition*}
\label{prop:wire_univ_approx}
Let $\mathcal{A}$ be a compact action set and $s$ be a given state. For any continuous Q-function $Q_s(a) :=  Q(s, a)$ and any $\epsilon > 0$, there exists a finite set of control-points $\{\hat{a}_1, \dots, \hat{a}_N\} \subset \mathcal{A}$ with corresponding values $y_i = Q_s(a_i)$ such that the wire-fitting interpolator
\[
f(a)=\frac{\sum_{i=1}^{N} y_i\,w_i(a)}{\sum_{i=1}^{N} w_i(a)},
\qquad 
w_i(a)=\frac{1}{|a-a_i|+\max_{k}(y_k-y_i)}, \;\text{satisfies}
\]
\[
\lVert f-Q_s\rVert_\infty=\sup_{a\in\mathcal{A}}|f(a)-Q_s(a)|<\epsilon.
\]
Hence, the wire-fitting formulation can approximate any continuous $Q(s,\cdot)$ arbitrarily well.
\end{proposition*}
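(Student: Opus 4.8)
The plan is to reduce the claim to the uniform continuity of $Q_s$ on the compact set $\mathcal{A}$ and then to choose the control-points so that, at every query action $a$, the weighted average defining $f(a)$ is dominated by the control-points lying in a small neighborhood of $a$. Given $\epsilon>0$, first invoke uniform continuity to pick $\delta>0$ such that $|Q_s(a)-Q_s(a')|<\epsilon/3$ whenever $|a-a'|\le\delta$. I would then take $\{a_1,\dots,a_N\}$ to contain a $\delta$-net of $\mathcal{A}$, to include a maximizer of $Q_s$ (so that $y_{\max}=\max_i y_i$ equals $\max_{a\in\mathcal{A}}Q_s(a)$), and --- the crucial ingredient --- to add further control-points wherever the domination step below requires them.

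For any such configuration, fix $a$, set $q=Q_s(a)$, and write the error as a weighted average,
\[
f(a)-q=\frac{\sum_i (y_i-q)\,w_i(a)}{\sum_i w_i(a)} .
\]
Split the indices into the near set $\mathcal{N}=\{i:|a-a_i|\le\delta\}$ and the far set $\mathcal{F}=\{i:|a-a_i|>\delta\}$. For $i\in\mathcal{N}$ we have $|y_i-q|<\epsilon/3$, hence $\bigl|\sum_{i\in\mathcal{N}}(y_i-q)w_i(a)\bigr|<\tfrac{\epsilon}{3}\sum_i w_i(a)$; for $i\in\mathcal{F}$ we only know $|y_i-q|\le\mathrm{osc}(Q_s):=\max Q_s-\min Q_s$, so $\bigl|\sum_{i\in\mathcal{F}}(y_i-q)w_i(a)\bigr|\le\mathrm{osc}(Q_s)\sum_{i\in\mathcal{F}}w_i(a)$. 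Thus it is enough to show that the far-weight fraction $\bigl(\sum_{i\in\mathcal{F}}w_i(a)\bigr)\big/\bigl(\sum_i w_i(a)\bigr)$ can be driven below $\epsilon/(3\,\mathrm{osc}(Q_s))$ uniformly in $a$; taking the supremum over the compact set $\mathcal{A}$ then yields $\lVert f-Q_s\rVert_\infty<\epsilon$.

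Bounding that far-weight fraction uniformly is the step I expect to be the real obstacle, and it is exactly where the form of the weights matters. The helpful inequalities are $w_i(a)\le 1/|a-a_i|\le 1/\delta$ for $i\in\mathcal{F}$ and $w_i(a)\le 1/\max_k(y_k-y_i)$ for all $i$, which damp distant and low-valued control-points; the awkward facts are that the additive smoothing term $\max_k(y_k-y_i)=y_{\max}-y_i$ vanishes at the top control-points, so a far-away near-maximal control-point keeps a stubbornly large weight, while each near control-point contributes only the bounded amount $w_i(a)\ge 1/(\delta+\mathrm{osc}(Q_s))$. The construction therefore has to guarantee that the local count $|\mathcal{N}|$ is large enough --- in particular that regions where $Q_s$ (hence $y_i$) is small are populated densely enough --- that $\sum_{i\in\mathcal{N}}w_i(a)$ still dominates $\sum_{i\in\mathcal{F}}w_i(a)$ simultaneously for every $a$, without any one region's control-points over-weighting far-off query actions. (If the smoothing constants were instead allowed to be sent to zero, the interpolator would reproduce each $y_i$ exactly at $a_i$ and a routine partition-of-unity argument would finish the proof; the difficulty is precisely that the smoothing is pinned to $y_{\max}-y_i$.) Engineering this balance --- or, alternatively, finding a placement that directly bounds $\sum_i w_i(a)$ from below --- is the delicate heart of the argument; once such a configuration is available, the $\epsilon/3+\epsilon/3<\epsilon$ accounting above concludes.
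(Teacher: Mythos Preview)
Your strategy---uniform continuity on the compact action set, a $\delta$-net of control-points, and a near/far split of the convex-combination error---is exactly the paper's. The step you leave open, bounding the far-weight fraction $\bigl(\sum_{i\in\mathcal{F}}w_i\bigr)\big/\bigl(\sum_i w_i\bigr)$ uniformly in $a$, is also where the paper concentrates: it bounds each individual ratio $w_j(a)/w_{i^\star}(a)\le 1+cR/\delta=:\theta$ (with $i^\star$ the nearest control-point and $R=\mathrm{osc}(Q_s)$), asserts from this that $\sum_{j\in\mathcal{F}}\lambda_j(a)\le\theta^{-1}$, and then shrinks $\delta$ further so that $\theta^{-1}<\epsilon/(2R)$. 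That ratio argument is the only ingredient separating your write-up from the paper's.

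Your caution about this step is warranted, however. The passage from the per-index bound $w_j/w_{i^\star}\le\theta$ (with $\theta>1$) to the aggregate bound $\sum_{j\in\mathcal{F}}\lambda_j\le\theta^{-1}$ is stated without justification in the paper and does not follow in general: the number of far indices grows as the net is refined, and your observation that a far-away near-maximal control-point carries weight $\approx 1/|a-a_j|$ with no damping from the smoothing term is precisely the obstruction. Concretely, for $Q_s(a)=a$ on $[0,1]$ with the uniform grid $a_i=i/N$ one has $w_i(0)=1/(i/N+1-i/N)=1$ for every $i$, so the far fraction at $a=0$ equals $(N-1)/(N+1)$ and does not shrink as $\delta\to 0$; the paper's $\delta$-net construction therefore cannot give $\|f-Q_s\|_\infty\to 0$ on its own. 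Your proposed fix---oversampling low-$Q_s$ regions so that $|\mathcal{N}|$ dominates---is a more honest attack on the actual difficulty, but as you say it still lacks a uniform quantitative construction; neither you nor the paper supplies one.
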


\begin{figure}[t]
    \centering
    \includegraphics[width=\linewidth]{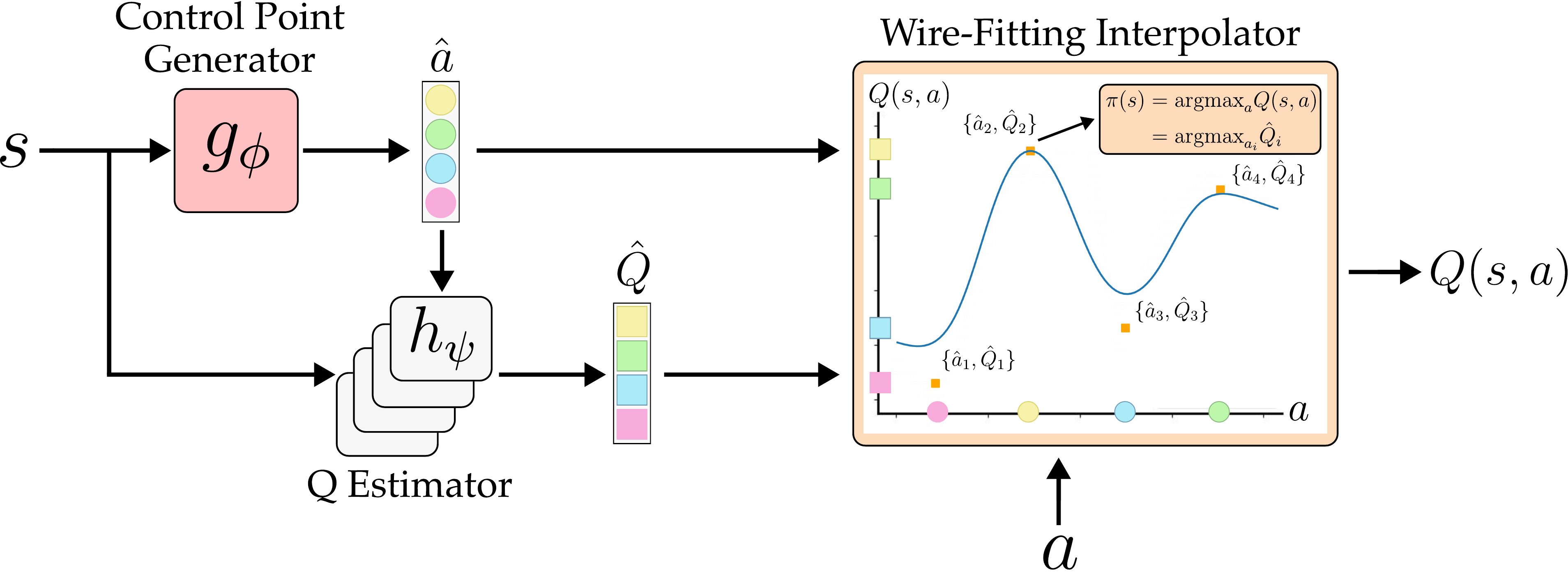}
    \caption{\textbf{Q3C Architecture} consists of 3 components: (i) a control-point generator estimates the representative $N$ control-point actions, (ii) a Q-estimator estimates the values of these $N$ points, and (iii) a wire-fitting interpolator estimates the inverse-distance weighted Q-value of the given action.}
    \label{fig:architecture}
\end{figure}

\textbf{Extension to Deep Reinforcement Learning.} Prior attempts on adapting the above wire-fitting framework to deep neural networks struggled on high-dimensional continuous control~\citep{lim2018actorexpert}. This is in line with prior works in deep RL~\citep{mnih2013playing, lillicrap2016continuous} that show the need for specialized stabilization techniques in the RL algorithm and architecture. Therefore, we propose to (i) build on the exploration and function approximation tools of the TD3 algorithm~\citep{fujimoto2018addressing} which analogously learns deterministic policies for continuous control, (ii) reduce optimization complexity of the control-point architecture (Section~\ref{sec:reducing_optimization}), and (iii) improve the robustness of training across different environments (Section~\ref{sec:robustness}).

\textbf{Building on TD3.} We explore with Gaussian-noise added over the deterministic action selected by greedy maximization. We augment the training with twin Q-networks to avoid overestimation and target networks to make the learning targets stationary in Eq.~\eqref{eq:q_loss}. To encourage generalization, we apply target policy smoothing to the maximizing action $a' \in \mathcal{A}$, obtained via Eq.~\eqref{eq:policy}.

\subsection{Reducing Optimization Complexity of control-points}
\label{sec:reducing_optimization}
While building on TD3 provides the basic framework for deep wire-fitting, the control-point architecture still suffers from two complexities of optimization: (1) a control-point's Q-value $\hat{Q}_i(s)$ is learned without conditioning on its corresponding action $\hat{a}_i(s)$ and must implicitly learn it in neural weights, and (2) Q-function's global expressivity with many control-points comes at the cost of local learning inefficiency because the backpropagated gradient is spread over all the control-points.

\textbf{Action-Conditioned Q-value Generation.} Independently predicting the Q-values $\hat{Q}_i(s)$ from their control-points $\hat{a}_i(s)$ lets the network assign very different values to identical or near-identical actions, destabilizing learning. We instead structure the control-point architecture into 2 stages: a control-point generator $g_\phi(s)$ outputs $N$ control-point actions $\hat{a}_i(s)$, for which a separate Q-estimator obtains the corresponding Q-values, $\hat{Q}_i(s) = h_\psi(s, \hat{a}_i)$. The idea of evaluating various control-points with the same Q-estimator is motivated from prior work on action representation generalization~\citep{jain2020generalization, jain2021know}, and ensures consistency of Q-values and simplifies learning (see Figure~\ref{fig:architecture}).

\textbf{Relevance-Based Control-Point Filtering.}
To enforce stricter locality than the soft weighting in Eq.~\eqref{eq:func_max}, we evaluate \(Q(s,a)\) only using the top \(k\) control-point weights $w_i(s, a)$ for the action \(a\) discarding the remaining \(N-k\). This hard filter removes spurious influence from distant points, sharpens the local value landscape, and yields a simpler learning task. Empirically, choosing \(k \ll N\) consistently improves both training stability and final performance across benchmarks.

\subsection{Robustness of Training across Different Tasks}
\label{sec:robustness}
The performance of the wire-fitting framework may still degrade when reward scales, action ranges, or environment dynamics differ markedly across tasks. To be a robust algorithm, Q3C must (i) maintain a diverse, well-spread set of control-points so the critic models a representative slice of the action space in every state, and (ii) normalize the scales of the action distance and Q-value difference despite varying reward magnitudes across states and tasks.

\textbf{control-point Diversity.} While action-conditioned Q-value generation disentangles Q-values from control-point generation, it does not ensure that control-points cover the action space. In practice, we observe that control-points often cluster near the boundaries, limiting the expressiveness of the learned Q-function (see Figure~\ref{fig:q_vis}). While policies acting at extremes can perform well in certain scenarios \citep{seyde2021is}, more uniformly distributed control-points offer richer representations and improved robustness when applying Q3C.
To spread them out, we add a pairwise \emph{separation} loss with $\varepsilon \ll 1$:
\begin{align}
L_{\text{separation}}(\phi)=\frac{1}{N(N-1)}\sum_{i\ne j}\frac{1}{\lVert\hat a_i(s)-\hat a_j(s)\rVert_2+\varepsilon},
\end{align}
which is minimized when the points are uniformly dispersed. We add this loss to the Bellman loss in Eq.~\eqref{eq:q_loss} weighted by a hyperparameter, $\lambda \in (0,1]$.

\textbf{Scale-Aware control-points and Q-values}
We normalize action spaces to $[-1,1]$ and use a $tanh$ nonlinearity in the control-point generator. In Eq.~\eqref{eq:q_wire_fitting}, while the action distances $\lvert a - \hat a_i(s)\rvert^2$ are bounded in $[0,1]$, the Q-value difference term $c_i \cdot (y_\text{max} - y_i)$ can vary widely between environments, even with shared $c_i = c$. We (i) rescale each state’s control-point values to $\tilde Q_i\!\in\![0,1]$ within the weight term only, $\tilde{Q}_i = \frac{\hat{Q}_i- \hat{Q}_{\text{min}}}{\hat{Q}_{\text{max}}-\hat{Q}_{\text{min}}}$ and (ii) anneal the smoothing factor $c_i$ exponentially. This prevents large rewards from overwhelming spatial information and keeps learning robust across diverse tasks. 
Combining all the aforementioned modifications, we present Q3C, a pure value-based reinforcement learning algorithm for continuous control in Algorithm~\ref{alg:summary}.

\begin{algorithm}[t]
\caption{Q3C}
\label{alg:summary}
\begin{algorithmic}
\State Initialize control-point generators $g_{\phi_1}, g_{\phi_2}$ and Q estimators $h_{\psi_1}, h_{\psi_2}$
\State Initialize target networks $\psi_i',\phi_i' \gets \psi_i, \phi_i$
\State Initialize replay buffer
\State Define $\texttt{calc\_q\_value}(s, a, \phi, \psi)\gets$  following Eq.~\eqref{eq:q_wire_fitting}
\State Define $\texttt{get\_action(s)} \gets$ Eq~\eqref{eq:policy}
\For{$t=1$ to $T$}
    \State \textbf{Select Action}
    \State Select action with exploration noise and observe reward $r$ and new state $s$. 
    \State Store transition tuple $(s,a,r,s')$ in replay buffer
    \State \textbf{Update}
    \State Sample mini-batch of transitions $(s,a,r,s')$ from replay buffer
    \State $\tilde{a} \gets \texttt{get\_action($s$)} + \epsilon, \textrm{where }\epsilon \sim \text{clip}(\mathcal{N}(0,\tilde{\sigma}), -\delta, +\delta)$
    \State $Q_i(s',\tilde{a}) \gets$ \texttt{calc\_q\_value($s,a,\phi_i',\psi_i'$)} for $i=1,2$
    \State $y \gets r + \gamma \text{min}_{i\in\{1,2\}}Q_i(s',\tilde{a})$
    \State $Q_i(s,a) \gets$ \texttt{calc\_q\_value($s,a,\phi_i,\psi_i$)}
    \State Calculate losses $L(\phi_i,\psi_i) = L_{\text{Bellman}}(\phi_i,\psi_i) + L_{\text{seperation}}(\phi_i)$ for $i=1,2$
    \State Update parameters $\phi_i,\psi_i$ for $i=1,2$
    \If {$t$ mod $d = 0$}
    \State Update target networks:
    \State $\phi_i' \gets \tau\phi_i + (1-\tau)\phi_i'$
    \State $\psi_i' \gets \tau\psi_i + (1-\tau)\psi_i'$
    \EndIf
\EndFor
\end{algorithmic}
\end{algorithm}

\section{Experiments}
\label{experiments}
We conduct experiments across a range of continuous control tasks and baselines. Our goal is to evaluate both performance in standard benchmark environments and robustness in settings with complex, multimodal Q-functions. Further details about the environments can be found in Appendix~\ref{supp:environment}.

\begin{table}[t]
\centering
\caption{\textbf{Standard Environments. }Final performance on Classic Mujoco Environments shows that Q3C is comparable to TD3 while outperforming other baselines (mean $\pm$ std).\newline}
\label{tab:final_performance}
\resizebox{\textwidth}{!}{%
\begin{tabular}{lccccc}
\toprule
\textbf{Environment} & \textbf{TD3} & \textbf{NAF} & \textbf{Wire-Fitting} & \textbf{RBF-DQN} & \textbf{Q3C} \\
\midrule
Pendulum-v1 & $-144.64\pm25.28$ & $-252.36\pm56.63$ & $-351.52\pm390.18$ & $-143.88\pm23.86$ & $-159.53\pm16.46$ \\
Swimmer-v4 & $300.70\pm125.64$ & $20.63\pm12.62$ & $313.63\pm106.23$ & $92.38\pm44.97$ & $316.40\pm14.75$ \\
Hopper-v4 & $3113.41\pm888.17$ & $500.80\pm240.93$ & $1987.50\pm1127.06$ & $2189.37\pm1093.13$ & $3206.14\pm407.23$ \\
BipedalWalker-v3 & $309.62\pm10.94$ & $-108.19\pm34.76$ & $70.01\pm100.28$ & $265.35\pm74.38$ & $290.11\pm26.43$ \\
Walker2d-v4 & $4770.82\pm560.16$ & $2179.56\pm1034.59$ & $2462.30\pm1095.41$ & $781.58\pm282.66$ & $3977.39\pm879.70$ \\
HalfCheetah-v4 & $9984.74\pm1076.58$ & $3531.50\pm802.84$ & $7546.23\pm1234.31$ & $6175.57\pm3044.93$ & $9468.66\pm949.01$ \\
Ant-v4 & $5167.68\pm673.44$ & $-18.10\pm0.30$ & $1154.59\pm420.92$ & $1674.03\pm964.60$ & $3698.41\pm1314.88$ \\
\bottomrule
\end{tabular}
}
\end{table}
\begin{figure}[t]
    \centering    \includegraphics[width=\linewidth]{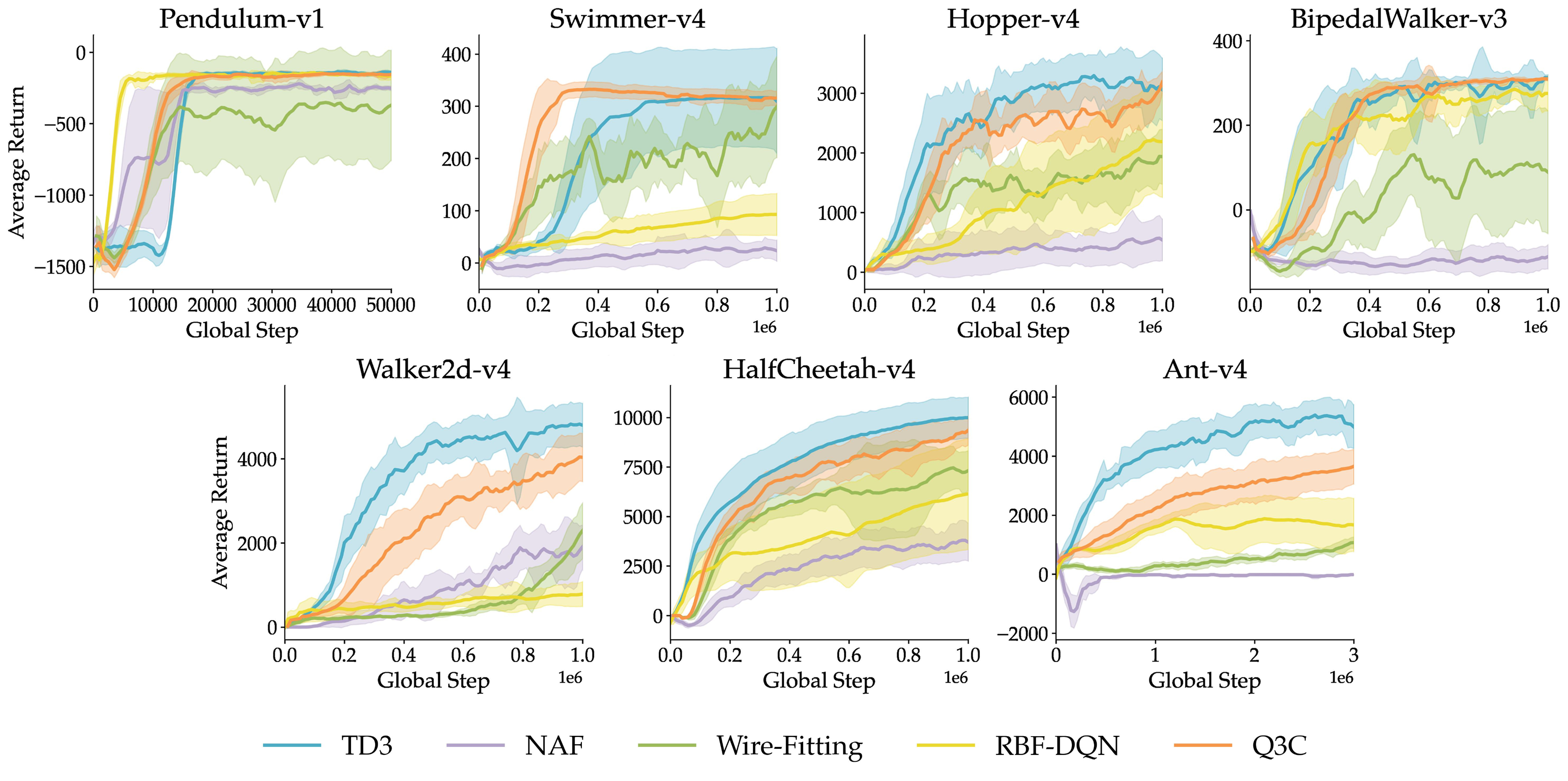}
    \caption{\textbf{Standard Environments. }Q3C is comparable in performance to TD3 and outperforms other actor-free value-based baselines in most environments.}
    \label{fig:results}
    \vspace{-10pt}
\end{figure}

\textbf{Environments.} We evaluate our method on several tasks from the Gymnasium suite \cite{towers2024gymnasium}. Specifically, we use Pendulum, Swimmer, Hopper, Bipedal Walker, Walker2d, Half Cheetah, and Ant tasks to cover a range of task difficulty. In all experiments, we use state-based observations and do not modify the reward function of the tasks. In addition, similar to prior work \citep{jain2025mitigating, ryu2020caql}, we create restricted versions of a subset of the environments, namely Inverted Pendulum, Hopper, and HalfCheetah. We follow the same procedure as \citet{jain2025mitigating}, where the action space is restricted by defining a set of hyperspheres as the valid section of the space. The actions outside this space are defined as invalid and have no effect on the environment. These restricted settings are designed to induce non-convex Q-functions, where local maximization fails to find the optimal action.

\begin{table}[t]
\centering
\caption{\textbf{Restricted Environments. } Q3C outperforms TD3 and actor-free baselines (mean $\pm$ std).\newline}
\label{tab:restricted_final_performance}
\resizebox{\textwidth}{!}{%
\begin{tabular}{lccccc}
\toprule
\textbf{Environment} & \textbf{TD3} & \textbf{NAF} & \textbf{Wire-Fitting} & \textbf{RBF-DQN} & \textbf{Q3C} \\
\midrule
InvertedPendulumBox & $782.76\pm348.92$ & $909.72\pm120.14$ & $386.38\pm307.57$ & $862.02\pm398.05$ & $1000\pm0$ \\
HalfCheetahBox      & $2276.70\pm2036.59$ & $4867.05\pm1487.69$ & $-2139.78\pm4702.25$ & $2238.38\pm3227.31$ & $4357.82\pm1503.33$ \\ 
HopperBox           & $1406.83\pm1162.72$ & $461.54\pm389.04$ & $169.78\pm812.22$ & $1641.15\pm796.76$ & $1974.28\pm1170.05$ \\
\bottomrule
\end{tabular}
}
\end{table}
\begin{figure}[t]
    \centering
\includegraphics[width=\linewidth]{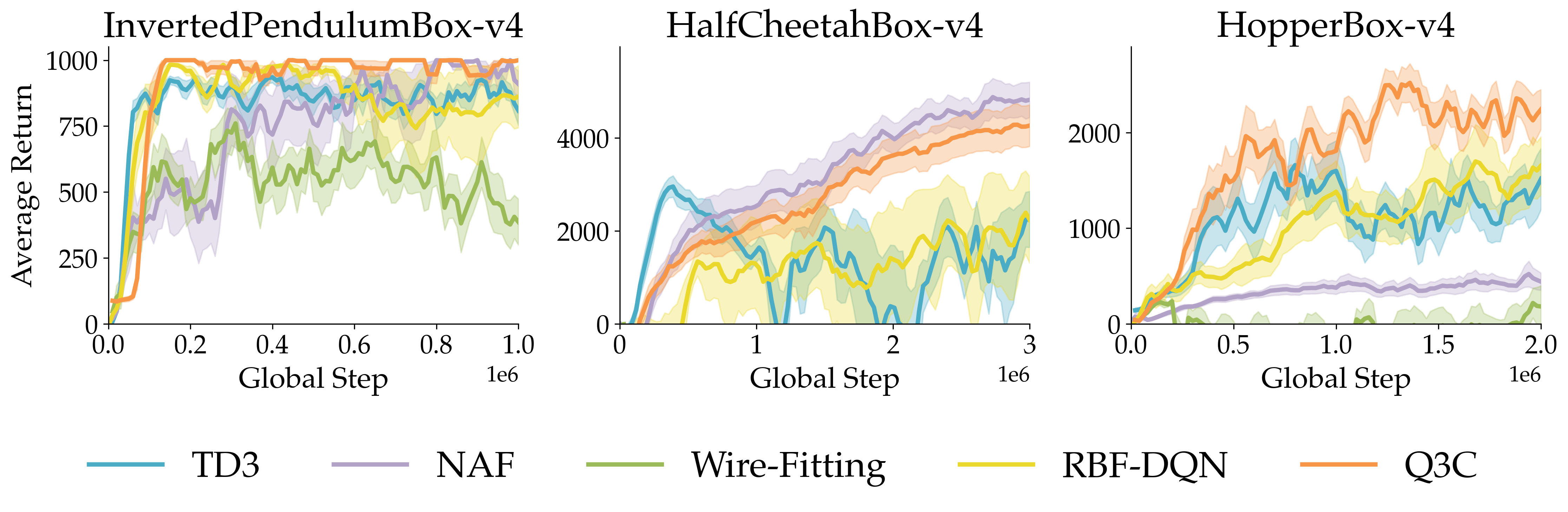}
    \caption{\textbf{Restricted Environments. }Q3C consistently outperforms TD3, RBF-DQN, and Wire-Fitting in restricted environments and largely outperforms NAF. Q3C converges at a higher reward with high stability while other algorithms are stuck at local optima.}
    \label{fig:restricted_results}
\end{figure}

\textbf{Baselines.} We primarily compare Q3C against deterministic RL algorithms, including both actor-critic and value-based methods, as our approach results in a deterministic policy and follows TD3's exploration. Whenever possible, we use the official implementations from the stable-baselines3~\citep{stable-baselines3} and tuned hyperparameters from rlzoo3~\citep{rl-zoo3} to ensure optimized baselines. We implement Q3C using the same stable-baselines3 backbone to minimize implementation-level differences between methods.

\model\ \textbf{(Ours)}: Q-learning for continuous control that learns a structurally maximizable Q-function via a set of learned control-points with improved optimization and robustness for deep networks.

\wfq\ \citep{gaskett1999q, gaskett2000reinforcement}: Vanilla wire-fitting Q-function approximation without our contributions.

\td\ \citep{fujimoto2018addressing}: A state-of-the-art actor-critic algorithm that improves stability in continuous control by using double Q-learning, delayed policy updates, and target policy smoothing.

\naf\ \citep{gu2016continuousdq}: A value-based algorithm for continuous action spaces that assumes the Q-function to a quadratic form, allowing the optimal action to be computed analytically without an explicit actor.

\rbf\ \citep{asadi2021rbfdqn}: A deep Q-learning algorithm variant that uses radial basis functions to approximate Q-functions in continuous action spaces.

\textbf{Evaluation Scheme.} We train 10 different random seeds for each algorithm. Throughout training, we evaluate each method every 10000 steps by running 10 rollout episodes and report the average return. The curves correspond to the mean, and the shaded region to one standard error across 10 trials.

\subsection{Standard Environments}
\label{sec:standard_env_results}

We present our quantitative results in Table~\ref{tab:final_performance} and learning curves in Figure~\ref{fig:results}. Q3C achieves performance comparable to TD3 on most tasks, with the exception of Ant-v4, where it performs suboptimally. Compared to the vanilla wire-fitting baseline, which lacks our proposed additions, Q3C achieves substantial improvements across all benchmarks, highlighting the impact of its components.

Other value-based algorithms than Q3C struggle in most environments. NAF consistently underperforms, likely due to its restrictive inductive bias that the Q-function is quadratic in the action space, which does not hold for complex control problems. RBF-DQN similarly achieves suboptimal performance, possibly because its function approximation cannot reliably recover the true maximizing action without excessive smoothing. Furthermore, it requires a large number of centroids ($\sim100$) to achieve sufficient Q-function expressivity, limiting scalability to high-dimensional environments.

These results show that Q3C is a viable alternative to deterministic actor-critic methods in common RL benchmarks and the state-of-the-art actor-free method for continuous action spaces.

\subsection{Restricted Environments}
Environments with restricted action spaces help evaluate robustness of Q3C to complex Q-value landscapes~\citep{jain2025mitigating}. These constraints induce sharp discontinuities in the Q-values of nearby actions, because they result in significantly different outcomes and returns. As a consequence, the Q-function becomes highly non-convex and difficult to optimize.
Our experiments on restricted environments presented in Figure~\ref{fig:restricted_results} and Table~\ref{tab:restricted_final_performance} show that TD3 performs significantly worse in restricted environments than in their unrestricted counterparts. This is expected since gradient-based optimization of the policy can get stuck at local optima in highly non-convex Q-functions. In contrast, Q3C consistently outperforms TD3 by avoiding restrictive policy parameterizations and instead leveraging direct maximization over a learned Q-function. 

Q3C similarly outperforms other value-based algorithms. The vanilla wire-fitting algorithm achieves minimal to no performance in the restricted environments, highlighting the necessity of our additional design components. RBF-DQN and NAF achieve moderate performance but are still suboptimal. This is probably due to similar drawbacks to those outlined in Section~\ref{sec:standard_env_results} for standard environments, but are exacerbated with the increased complexity of restricted environments. NAF performs marginally better than Q3C in HalfCheetahBox-v4, suggesting that an approximation with a quadratic function might be enough to sufficiently represent the Q-function in this environment.
 
Overall, these results highlight Q3C's ability to handle discontinuities in the action space. We also present extended comparisons against other popular RL algorithms in Appendix~\ref{supp:sac_ppo}.

\begin{table}[t]
\centering
\caption{\textbf{Ablations of Q3C. } Final performance comparison shows every component is necessary to improve the converged performance of Q3C (mean $\pm$ std).\newline}
\label{tab:ablation_final_performance}
\resizebox{\textwidth}{!}{%
\begin{tabular}{lcccc}
\toprule
\textbf{Algorithm} & \textbf{Hopper-v4} & \textbf{BipedalWalker-v3} & \textbf{Walker2d-v4} & \textbf{HalfCheetah-v4} \\
\midrule
Q3C   & \textbf{3206.14 $\pm$ 407.23} & \textbf{290.11 $\pm$ 26.43} & \textbf{3977.39 $\pm$ 879.70} & \textbf{9468.66 $\pm$ 949.01}  \\
Q3C - CondQ   & 2329.9 $\pm$ 610.1 & 286.0 $\pm$ 46.7 & 3614.3 $\pm$ 488.7 & 8385.9 $\pm$ 564.9  \\
Q3C - Ranking   & 3036.5 $\pm$ 371.4 & 179.8 $\pm$ 187.3 & 3167.5 $\pm$ 1167.2 & 8960.9 $\pm$ 519.6  \\
Q3C - Div   & 1921.1 $\pm$ 1117.9 & -67.8 $\pm$ 119.6 & 3174.3 $\pm$ 1194.7 & 5282.7 $\pm$ 1116.2  \\
Q3C - Norm   & 2915.3 $\pm$ 366.6 & 261.5 $\pm$ 83.4 & 2880.1 $\pm$ 1327.5 & 8745.5 $\pm$ 529.4  \\
Wire-Fitting   & 1987.50 $\pm$ 1127.06 & 70.01 $\pm$ 100.28 & 2462.30 $\pm$ 1095.4 & 7546.23 $\pm$ 1234.31  \\
\bottomrule
\end{tabular}
}
\end{table}
\begin{figure}[t]
    \centering
    \vspace{10pt}
    \includegraphics[width=\linewidth]{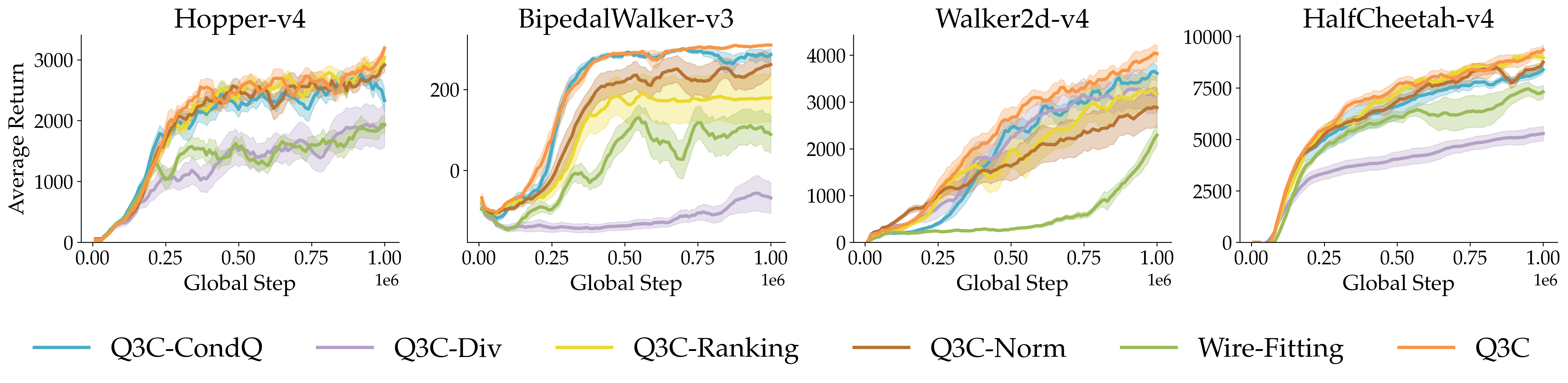}
    \caption{\textbf{Ablations of Q3C.} Every component of Q3C is validated for importance. Individual Q3C components complement each other and combined Q3C model visibly outperforms ablations.}
    \label{fig:ablation}
    \vspace{-10pt}
\end{figure}

\section{Ablations}
To understand the contribution of each component of our method, we conduct ablation experiments by selectively removing one component at a time from Q3C. The ablated components include:
\begin{itemize}
    \item Q3C without conditional Q-value generation (Q3C-CondQ)
    \item Q3C without control-point diversification (Q3C-Div)
    \item Q3C without ranking of relevant control-points (Q3C-Ranking)
    \item Q3C without normalization of Q-value differences in the wire-fitting kernel (Q3C-Norm)
\end{itemize}
We evaluate these variants on four environments—Hopper, BipedalWalker, Walker2d, and HalfCheetah—and report learning curves in Figure~\ref{fig:ablation}, and the corresponding final reward values in \autoref{tab:ablation_final_performance}. For reference we also include the vanilla wire-fitting approach in the plots. Further ablations on the design choices can be found in Appendix~\ref{sec:extra_ablations}.

Across all tasks, full Q3C outperforms all ablated variants, highlighting the importance of each component. While the impact of each feature varies by environment, removing any single component generally results in performance and stability degradation. Notably, even the weakest ablated variants outperform vanilla wire-fitting, demonstrating the effectiveness of our contributions.

One exception is the Q3C-Div variant: in certain environments, removing the control-point diversification loss leads to a substantial performance drop. We attribute this to the underlying Q-value landscapes of those environments, and the role of diversification in ensuring sufficient action space coverage, which is essential for the other components—such as relevance filtering and Q-value normalization—to operate effectively.

\subsection{Visualizing the effect of Q3C components} 

\begin{figure}[t]
    \centering\includegraphics[width=\linewidth]{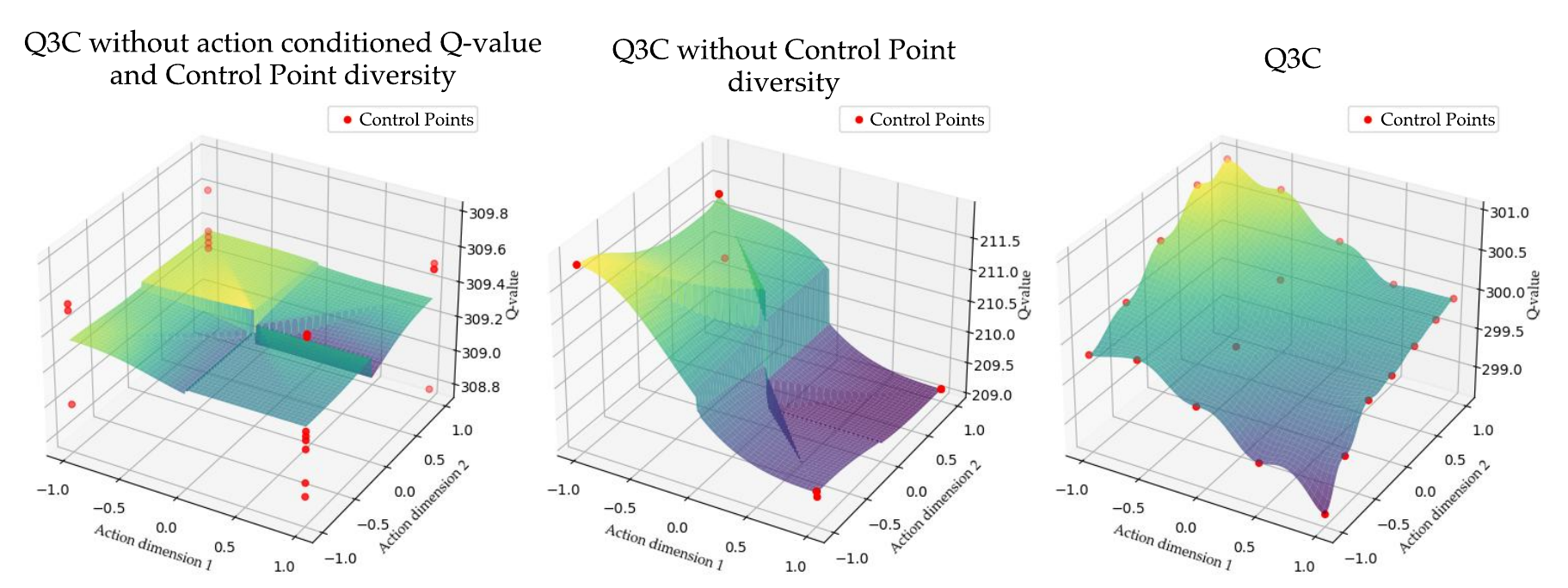}\caption{\textbf{Learned Q-Function Visualizations of Key Q3C Component Ablations.} Conditional Q-Value Architecture and control-point Diversity Loss are Essential For Reasonable Q-Functions}
    \label{fig:q_vis}
    \vspace{-10pt}
\end{figure}

As seen in \autoref{fig:q_vis}, without explicit constraints on control-points (left), the learned control-points suffer from stacking, regression to corner action spaces, and noticeably minimal distribution (Section~\ref{sec:function_approx}). With the architectural modification to condition the Q-values on the control-points (center), the control-points still stack towards the corners, but all control-points with the same action vector now retain the same Q-value, which is a key improvement in consistency (Section~\ref{sec:reducing_optimization}). When extending this revised architecture with control-point diversity (Section~\ref{sec:robustness}), Q3C achieves both explicit consistency across all control-points and a solid control-point distribution that prevents premature collapsing at endpoints and encourages expressivity.

\section{Conclusion}
In this work, we introduced Q3C, an actor-free Q-Learning approach for off-policy reinforcement learning in continuous action spaces. Our approach builds on the control-point function approximator, augmented with several key architectural innovations, such as action-conditioned Q-value generation, relevance-based control-point filtering, encouraging control-point diversity, and normalized scaling for robustness. We demonstrated that these additions enable the function approximator to match or exceed the performance of state-of-the-art online RL algorithms on standard benchmarks. Moreover, in constrained environments where only a subset of the action space is admissible, Q3C achieves robust performance while TD3's gradient maximization often fails.

\textbf{Limitations and Future Work.}
Q3C simply adopts TD3's exploration scheme, and its sample efficiency can lag behind other baselines in certain environments. More work on exploration strategies could be promising, such as Boltzmann over the control-point values. As a hybrid between DQN-style methods and actor-critic algorithms, Q3C offers flexibility to incorporate enhancements from both paradigms. Future work could explore integrating sample-efficiency improvements such as $n$-step returns \citep{hessel2018rainbow}, prioritized experience replay \citep{schaul2016prioritizedexperience}, or using batchnorm layers in the critic network instead of target networks \citep{bhatt2024crossq}. Additionally, adapting the control-point approximator to the offline RL setting could be an exciting direction: due to its inherent constraints on Q-value interpolation, it may offer natural mitigation against overestimation, a common challenge in offline learning. Finally, we only employ Q3C on deterministic Q-learning, however, future work can investigate extending it to stochastic policies where the control-point architecture models a soft-Q function like SAC.

\newpage
\section*{Acknowledgments}
The authors thank the USC Center for Advanced Research Computing (CARC) for providing us with compute resources. Urvi Bhuwania was supported by a CURVE fellowship from the USC Viterbi School of Engineering. We appreciate the valuable discussions with anonymous reviewers and members of the USC Lira Lab. We thank Norio Kosaka for starter code of restricted environments.

\bibliographystyle{abbrvnat}
\bibliography{references}

\begin{thebibliography}{53}
\providecommand{\natexlab}[1]{#1}
\providecommand{\url}[1]{\texttt{#1}}
\expandafter\ifx\csname urlstyle\endcsname\relax
  \providecommand{\doi}[1]{doi: #1}\else
  \providecommand{\doi}{doi: \begingroup \urlstyle{rm}\Url}\fi

\bibitem[Afsar et~al.(2022)Afsar, Crump, and Far]{afsar2022reinforcement}
M.~M. Afsar, T.~Crump, and B.~Far.
\newblock Reinforcement learning based recommender systems: A survey.
\newblock \emph{ACM Computing Surveys}, 55\penalty0 (7), 2022.

\bibitem[Amos et~al.(2017)Amos, Xu, and Kolter]{amos2017input}
B.~Amos, L.~Xu, and J.~Z. Kolter.
\newblock Input convex neural networks.
\newblock In \emph{International Conference on Machine Learning (ICML)}, 2017.

\bibitem[Asadi et~al.(2021)Asadi, Parikh, Parr, Konidaris, and Littman]{asadi2021rbfdqn}
K.~Asadi, N.~Parikh, R.~E. Parr, G.~D. Konidaris, and M.~L. Littman.
\newblock Deep radial-basis value functions for continuous control.
\newblock In \emph{AAAI Conference on Artificial Intelligence}, 2021.

\bibitem[Baird(1995)]{baird1995residual}
L.~Baird.
\newblock Residual algorithms: Reinforcement learning with function approximation.
\newblock In \emph{International Conference on Machine Learning (ICML)}, 1995.

\bibitem[Baird and Klopf(1993)]{baird1993reinforcement}
L.~C. Baird and A.~H. Klopf.
\newblock Reinforcement learning with high-dimensional continuous actions.
\newblock \emph{Wright Laboratory, Wright-Patterson Air Force Base, Tech. Rep. WL-TR-93-1147}, 15, 1993.

\bibitem[Bellemare et~al.(2017)Bellemare, Dabney, and Munos]{bellemare2017distributional}
M.~G. Bellemare, W.~Dabney, and R.~Munos.
\newblock A distributional perspective on reinforcement learning.
\newblock In \emph{International Conference on Machine Learning (ICML)}, 2017.

\bibitem[Bhatt et~al.(2024)Bhatt, Palenicek, Belousov, Argus, Amiranashvili, Brox, and Peters]{bhatt2024crossq}
A.~Bhatt, D.~Palenicek, B.~Belousov, M.~Argus, A.~Amiranashvili, T.~Brox, and J.~Peters.
\newblock Crossq: Batch normalization in deep reinforcement learning for greater sample efficiency and simplicity.
\newblock In \emph{International Conference on Learning Representations (ICLR)}, 2024.

\bibitem[Chow et~al.(2018)Chow, Nachum, Duenez-Guzman, and Ghavamzadeh]{chow2018lyapunov}
Y.~Chow, O.~Nachum, E.~Duenez-Guzman, and M.~Ghavamzadeh.
\newblock A lyapunov-based approach to safe reinforcement learning.
\newblock In \emph{Advances in Neural Information Processing Systems (NeurIPS)}, 2018.

\bibitem[De~Boer et~al.(2005)De~Boer, Kroese, Mannor, and Rubinstein]{de2005tutorial}
P.-T. De~Boer, D.~P. Kroese, S.~Mannor, and R.~Y. Rubinstein.
\newblock A tutorial on the cross-entropy method.
\newblock \emph{Annals of operations research}, 134\penalty0 (1), 2005.

\bibitem[Franke and Nielson(1980)]{franke1980smooth}
R.~Franke and G.~Nielson.
\newblock Smooth interpolation of large sets of scattered data.
\newblock \emph{International journal for numerical methods in engineering}, 15\penalty0 (11), 1980.

\bibitem[Fujimoto et~al.(2018)Fujimoto, Hoof, and Meger]{fujimoto2018addressing}
S.~Fujimoto, H.~Hoof, and D.~Meger.
\newblock Addressing function approximation error in actor-critic methods.
\newblock In \emph{International Conference on Machine Learning (ICML)}, 2018.

\bibitem[Gaskett et~al.(1999)Gaskett, Wettergreen, and Zelinsky]{gaskett1999q}
C.~Gaskett, D.~Wettergreen, and A.~Zelinsky.
\newblock Q-learning in continuous state and action spaces.
\newblock In \emph{Australasian Joint Conference on Artificial Intelligence}, 1999.

\bibitem[Gaskett et~al.(2000)Gaskett, Fletcher, and Zelinsky]{gaskett2000reinforcement}
C.~Gaskett, L.~Fletcher, and A.~Zelinsky.
\newblock Reinforcement learning for a vision based mobile robot.
\newblock In \emph{International Conference on Intelligent Robots and Systems (IROS)}, 2000.

\bibitem[Glover(1990)]{glover1990tabu}
F.~Glover.
\newblock Tabu search: A tutorial.
\newblock \emph{Interfaces}, 20\penalty0 (4), 1990.

\bibitem[Gu et~al.(2016)Gu, Lillicrap, Sutskever, and Levine]{gu2016continuousdq}
S.~S. Gu, T.~P. Lillicrap, I.~Sutskever, and S.~Levine.
\newblock Continuous deep q-learning with model-based acceleration.
\newblock In \emph{International Conference on Machine Learning (ICML)}, 2016.

\bibitem[Haarnoja et~al.(2018)Haarnoja, Zhou, Abbeel, and Levine]{haarnoja2018soft}
T.~Haarnoja, A.~Zhou, P.~Abbeel, and S.~Levine.
\newblock Soft actor-critic: Off-policy maximum entropy deep reinforcement learning with a stochastic actor.
\newblock In \emph{International Conference on Machine Learning (ICML)}, 2018.

\bibitem[Hasselt et~al.(2016)Hasselt, Guez, and Silver]{hasselt2016deep}
H.~v. Hasselt, A.~Guez, and D.~Silver.
\newblock Deep reinforcement learning with double q-learning.
\newblock In \emph{AAAI Conference on Artificial Intelligence}, 2016.

\bibitem[Hessel et~al.(2018)Hessel, Modayil, van Hasselt, Schaul, Ostrovski, Dabney, Horgan, Piot, Azar, and Silver]{hessel2018rainbow}
M.~Hessel, J.~Modayil, H.~van Hasselt, T.~Schaul, G.~Ostrovski, W.~Dabney, D.~Horgan, B.~Piot, M.~Azar, and D.~Silver.
\newblock Rainbow: combining improvements in deep reinforcement learning.
\newblock In \emph{AAAI Conference on Artificial Intelligence}, 2018.

\bibitem[Horgan et~al.(2018)Horgan, Quan, Budden, Barth-Maron, Hessel, van Hasselt, and Silver]{horgan2018distributed}
D.~Horgan, J.~Quan, D.~Budden, G.~Barth-Maron, M.~Hessel, H.~van Hasselt, and D.~Silver.
\newblock Distributed prioritized experience replay.
\newblock In \emph{International Conference on Learning Representations (ICLR)}, 2018.

\bibitem[Hu et~al.(2007)Hu, Fu, and Marcus]{hu2007model}
J.~Hu, M.~C. Fu, and S.~I. Marcus.
\newblock A model reference adaptive search method for global optimization.
\newblock \emph{Operations research}, 55\penalty0 (3), 2007.

\bibitem[Islam et~al.(2017)Islam, Henderson, Gomrokchi, and Precup]{islam2017reproducibility}
R.~Islam, P.~Henderson, M.~Gomrokchi, and D.~Precup.
\newblock Reproducibility of benchmarked deep reinforcement learning tasks for continuous control.
\newblock \emph{arXiv preprint arXiv:1708.04133}, 2017.

\bibitem[Jain et~al.(2020)Jain, Szot, and Lim]{jain2020generalization}
A.~Jain, A.~Szot, and J.~Lim.
\newblock Generalization to new actions in reinforcement learning.
\newblock In \emph{International Conference on Machine Learning}, pages 4661--4672. PMLR, 2020.

\bibitem[Jain et~al.(2021)Jain, Kosaka, Kim, and Lim]{jain2021know}
A.~Jain, N.~Kosaka, K.-M. Kim, and J.~J. Lim.
\newblock Know your action set: Learning action relations for reinforcement learning.
\newblock In \emph{International Conference on Learning Representations}, 2021.

\bibitem[Jain et~al.(2025)Jain, Kosaka, Li, Kim, Bıyık, and Lim]{jain2025mitigating}
A.~Jain, N.~Kosaka, X.~Li, K.-M. Kim, E.~Bıyık, and J.~J. Lim.
\newblock Mitigating suboptimality of deterministic policy gradients in complex q-functions.
\newblock In \emph{Reinforcement Learning Conference (RLC)}, 2025.

\bibitem[Kalashnikov et~al.(2018)Kalashnikov, Irpan, Pastor, Ibarz, Herzog, Jang, Quillen, Holly, Kalakrishnan, Vanhoucke, and Levine]{kalashnikov2018QTOpt}
D.~Kalashnikov, A.~Irpan, P.~Pastor, J.~Ibarz, A.~Herzog, E.~Jang, D.~Quillen, E.~Holly, M.~Kalakrishnan, V.~Vanhoucke, and S.~Levine.
\newblock Qt-opt: Scalable deep reinforcement learning for vision-based robotic manipulation.
\newblock In \emph{Conference on Robot Learning (CoRL)}, 2018.

\bibitem[Kalashnikov et~al.(2021)Kalashnikov, Varley, Chebotar, Swanson, Jonschkowski, Finn, Levine, and Hausman]{kalashnikov2021mt}
D.~Kalashnikov, J.~Varley, Y.~Chebotar, B.~Swanson, R.~Jonschkowski, C.~Finn, S.~Levine, and K.~Hausman.
\newblock Mt-opt: Continuous multi-task robotic reinforcement learning at scale.
\newblock \emph{arXiv preprint arXiv:2104.08212}, 2021.

\bibitem[Kirkpatrick et~al.(1983)Kirkpatrick, Gelatt~Jr, and Vecchi]{kirkpatrick1983optimization}
S.~Kirkpatrick, C.~D. Gelatt~Jr, and M.~P. Vecchi.
\newblock Optimization by simulated annealing.
\newblock \emph{science}, 220\penalty0 (4598), 1983.

\bibitem[Lee et~al.(2023)Lee, Xiao, Li, Wohlhart, Fischer, and Lu]{lee2023pi}
K.-H. Lee, T.~Xiao, A.~Li, P.~Wohlhart, I.~Fischer, and Y.~Lu.
\newblock Pi-qt-opt: Predictive information improves multi-task robotic reinforcement learning at scale.
\newblock In \emph{Conference on Robot Learning (CoRL)}, 2023.

\bibitem[Levine et~al.(2016)Levine, Finn, Darrell, and Abbeel]{levine2016end}
S.~Levine, C.~Finn, T.~Darrell, and P.~Abbeel.
\newblock End-to-end training of deep visuomotor policies.
\newblock \emph{The Journal of Machine Learning Research}, 17\penalty0 (1), 2016.

\bibitem[Lillicrap et~al.(2016)Lillicrap, Hunt, Pritzel, Heess, Erez, Tassa, Silver, and Wierstra]{lillicrap2016continuous}
T.~P. Lillicrap, J.~J. Hunt, A.~Pritzel, N.~Heess, T.~Erez, Y.~Tassa, D.~Silver, and D.~Wierstra.
\newblock Continuous control with deep reinforcement learning.
\newblock In \emph{International Conference on Learning Representations (ICLR)}, 2016.

\bibitem[Lim et~al.(2018)Lim, Joseph, Le, Pan, and White]{lim2018actorexpert}
S.~Lim, A.~G. Joseph, L.~Le, Y.~Pan, and M.~White.
\newblock Actor-expert: A framework for using action-value methods in continuous action spaces.
\newblock \emph{arXiv preprint arXiv:1810.09103}, 2018.

\bibitem[Millan et~al.(2002)Millan, Posenato, and Dedieu]{millan2002continuous}
J.~d.~R. Millan, D.~Posenato, and E.~Dedieu.
\newblock Continuous-action q-learning.
\newblock \emph{Machine Learning}, 49, 2002.

\bibitem[Mnih et~al.(2013)Mnih, Kavukcuoglu, Silver, Graves, Antonoglou, Wierstra, and Riedmiller]{mnih2013playing}
V.~Mnih, K.~Kavukcuoglu, D.~Silver, A.~Graves, I.~Antonoglou, D.~Wierstra, and M.~Riedmiller.
\newblock Playing atari with deep reinforcement learning.
\newblock \emph{arXiv preprint arXiv:1312.5602}, 2013.

\bibitem[Mnih et~al.(2015)Mnih, Kavukcuoglu, Silver, Rusu, Veness, Bellemare, Graves, Riedmiller, Fidjeland, Ostrovski, Petersen, Beattie, Sadik, Antonoglou, King, Kumaran, Wierstra, Legg, and Hassabis]{mnih2015human}
V.~Mnih, K.~Kavukcuoglu, D.~Silver, A.~A. Rusu, J.~Veness, M.~G. Bellemare, A.~Graves, M.~Riedmiller, A.~K. Fidjeland, G.~Ostrovski, S.~Petersen, C.~Beattie, A.~Sadik, I.~Antonoglou, H.~King, D.~Kumaran, D.~Wierstra, S.~Legg, and D.~Hassabis.
\newblock Human-level control through deep reinforcement learning.
\newblock \emph{Nature}, 2015.

\bibitem[Pourchot and Sigaud(2018)]{pourchot2018cem}
A.~Pourchot and O.~Sigaud.
\newblock Cem-rl: Combining evolutionary and gradient-based methods for policy search.
\newblock \emph{arXiv preprint arXiv:1810.01222}, 2018.

\bibitem[Raffin(2020)]{rl-zoo3}
A.~Raffin.
\newblock Rl baselines3 zoo, 2020.

\bibitem[Raffin et~al.(2021)Raffin, Hill, Gleave, Kanervisto, Ernestus, and Dormann]{stable-baselines3}
A.~Raffin, A.~Hill, A.~Gleave, A.~Kanervisto, M.~Ernestus, and N.~Dormann.
\newblock Stable-baselines3: Reliable reinforcement learning implementations.
\newblock \emph{Journal of Machine Learning Research}, 22\penalty0 (268), 2021.

\bibitem[Rajeswaran et~al.(2018)Rajeswaran, Kumar, Gupta, Vezzani, Schulman, Todorov, and Levine]{rajeswaran2017learning}
A.~Rajeswaran, V.~Kumar, A.~Gupta, G.~Vezzani, J.~Schulman, E.~Todorov, and S.~Levine.
\newblock Learning complex dexterous manipulation with deep reinforcement learning and demonstrations.
\newblock In \emph{Robotics: Science and Systems (RSS)}, 2018.

\bibitem[Ryu et~al.(2020)Ryu, Chow, Anderson, Tjandraatmadja, and Boutilier]{ryu2020caql}
M.~Ryu, Y.~Chow, R.~Anderson, C.~Tjandraatmadja, and C.~Boutilier.
\newblock Caql: Continuous action q-learning.
\newblock In \emph{International Conference on Learning Representations (ICLR)}, 2020.

\bibitem[Schaul et~al.(2016)Schaul, Quan, Antonoglou, and Silver]{schaul2016prioritizedexperience}
T.~Schaul, J.~Quan, I.~Antonoglou, and D.~Silver.
\newblock Prioritized experience replay.
\newblock In \emph{International Conference on Learning Representations (ICLR)}, 2016.

\bibitem[Schulman et~al.(2017)Schulman, Wolski, Dhariwal, Radford, and Klimov]{schulman2017ppo}
J.~Schulman, F.~Wolski, P.~Dhariwal, A.~Radford, and O.~Klimov.
\newblock Proximal policy optimization algorithms.
\newblock \emph{arXiv preprint arXiv:1707.06347}, 2017.

\bibitem[Seyde et~al.(2021)Seyde, Gilitschenski, Schwarting, Stellato, Riedmiller, Wulfmeier, and Rus]{seyde2021is}
T.~Seyde, I.~Gilitschenski, W.~Schwarting, B.~Stellato, M.~Riedmiller, M.~Wulfmeier, and D.~Rus.
\newblock Is bang-bang control all you need? solving continuous control with bernoulli policies.
\newblock In \emph{Advances in Neural Information Processing Systems (NeurIPS)}, 2021.

\bibitem[Shao et~al.(2022)Shao, You, Yan, Yuan, Sun, and Bohg]{shao2022grac}
L.~Shao, Y.~You, M.~Yan, S.~Yuan, Q.~Sun, and J.~Bohg.
\newblock Grac: Self-guided and self-regularized actor-critic.
\newblock In \emph{Conference on Robot Learning (CoRL)}, 2022.

\bibitem[Silver et~al.(2014)Silver, Lever, Heess, Degris, Wierstra, and Riedmiller]{silver2014deterministic}
D.~Silver, G.~Lever, N.~Heess, T.~Degris, D.~Wierstra, and M.~Riedmiller.
\newblock Deterministic policy gradient algorithms.
\newblock In \emph{International Conference on Machine Learning (ICML)}, 2014.

\bibitem[Silver et~al.(2017)Silver, Schrittwieser, Simonyan, Antonoglou, Huang, Guez, Hubert, Baker, Lai, Bolton, et~al.]{silver2017mastering}
D.~Silver, J.~Schrittwieser, K.~Simonyan, I.~Antonoglou, A.~Huang, A.~Guez, T.~Hubert, L.~Baker, M.~Lai, A.~Bolton, et~al.
\newblock Mastering the game of go without human knowledge.
\newblock \emph{Nature}, 550\penalty0 (7676), 2017.

\bibitem[Simmons-Edler et~al.(2019)Simmons-Edler, Eisner, Mitchell, Seung, and Lee]{simmons2019q}
R.~Simmons-Edler, B.~Eisner, E.~Mitchell, S.~Seung, and D.~Lee.
\newblock Q-learning for continuous actions with cross-entropy guided policies.
\newblock \emph{arXiv preprint arXiv:1903.10605}, 2019.

\bibitem[Srinivas and Patnaik(1994)]{srinivas1994genetic}
M.~Srinivas and L.~M. Patnaik.
\newblock Genetic algorithms: A survey.
\newblock \emph{computer}, 27\penalty0 (6), 1994.

\bibitem[Sutton and Barto(1998)]{sutton1998reinforcement}
R.~S. Sutton and A.~G. Barto.
\newblock \emph{Reinforcement Learning: An Introduction}.
\newblock The MIT Press, 1998.

\bibitem[Todorov et~al.(2012)Todorov, Erez, and Tassa]{todorov2012mujoco}
E.~Todorov, T.~Erez, and Y.~Tassa.
\newblock Mujoco: A physics engine for model-based control.
\newblock In \emph{IEEE/RSJ International Conference on Intelligent Robots and Systems (IROS)}, 2012.

\bibitem[Towers et~al.(2024)Towers, Kwiatkowski, Terry, Balis, De~Cola, Deleu, Goul{\~a}o, Kallinteris, Krimmel, KG, et~al.]{towers2024gymnasium}
M.~Towers, A.~Kwiatkowski, J.~Terry, J.~U. Balis, G.~De~Cola, T.~Deleu, M.~Goul{\~a}o, A.~Kallinteris, M.~Krimmel, A.~KG, et~al.
\newblock Gymnasium: A standard interface for reinforcement learning environments.
\newblock \emph{arXiv preprint arXiv:2407.17032}, 2024.

\bibitem[Van~Hasselt and Wiering(2007)]{van2007reinforcement}
H.~Van~Hasselt and M.~A. Wiering.
\newblock Reinforcement learning in continuous action spaces.
\newblock In \emph{IEEE International Symposium on Approximate Dynamic Programming and Reinforcement Learning}, 2007.

\bibitem[Wang et~al.(2019)Wang, Li, and Chan]{wang2019quadratic}
P.~Wang, H.~Li, and C.-Y. Chan.
\newblock Quadratic q-network for learning continuous control for autonomous vehicles.
\newblock \emph{arXiv preprint arXiv:1912.00074}, 2019.

\bibitem[Yan et~al.(2019)Yan, Li, Kalakrishnan, and Pastor]{yan2019learning}
M.~Yan, A.~Li, M.~Kalakrishnan, and P.~Pastor.
\newblock Learning probabilistic multi-modal actor models for vision-based robotic grasping.
\newblock In \emph{International Conference on Robotics and Automation (ICRA)}, 2019.

\end{thebibliography}


\newpage
\appendix

\section{Impact Statement}

This paper introduces a purely value-based reinforcement learning framework for continuous control, aimed at improving both stability and tractability in complex action spaces. By avoiding restrictive policy parameterizations and enabling direct action selection through a structured Q-function, our method is particularly well-suited for tasks with constrained or safety-critical action spaces. While our approach improves robustness and computational efficiency over existing baselines, it does not explicitly account for fairness, interpretability, or real-world deployment risks. Care must be taken when applying this method in safety-sensitive or high-stakes domains, where additional mechanisms—such as constraint-aware training or human oversight—may be necessary to ensure reliable and ethical deployment.

\section{Proof for Proposition}\label{sec:proof}

We show that replacing a neural network Q-function with wire-fitting interpolation in the Q-function preserves its universal approximation ability.
\begin{proposition*}
\label{prop:restate_wire_univ_approx}
Let $\mathcal{A}$ be a compact action set and $s$ be a given state. For any continuous Q-function $Q_s(a) :=  Q(s, a)$ and any $\epsilon > 0$, there exists a finite set of control-points $\{\hat{a}_1, \dots, \hat{a}_N\} \subset \mathcal{A}$ with corresponding values $y_i = Q_s(a_i)$ such that the wire-fitting interpolator
\[
f(a)=\frac{\sum_{i=1}^{N} y_i\,w_i(a)}{\sum_{i=1}^{N} w_i(a)},
\qquad 
w_i(a)=\frac{1}{|a-a_i|+c_i\max_{k}(y_k-y_i)}, \;\text{satisfies}
\]
\[
\lVert f-Q_s\rVert_\infty=\sup_{a\in\mathcal{A}}|f(a)-Q_s(a)|<\epsilon.
\]
Hence, the wire-fitting formulation can approximate any continuous $Q(s,\cdot)$ arbitrarily well.
\end{proposition*}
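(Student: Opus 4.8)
The plan is to prove this by a covering argument: build a wire-fitting interpolator whose control-points form a sufficiently fine mesh over $\mathcal{A}$, and then show that at every query action the interpolation weights are dominated by the control-points nearest to it, so that $f(a)$ tracks the values $y_i=Q_s(a_i)\approx Q_s(a)$ in its neighborhood. First I would use that $\mathcal{A}$ is compact and $Q_s$ continuous, hence uniformly continuous, to obtain $\delta>0$ with $|a-a'|\le\delta\implies|Q_s(a)-Q_s(a')|<\epsilon/2$, and set $M:=\sup_{\mathcal{A}}Q_s-\inf_{\mathcal{A}}Q_s<\infty$. Then, invoking compactness again, I would fix a finite $\delta'$-net $\{a_1,\dots,a_N\}\subset\mathcal{A}$ (every point of $\mathcal{A}$ within $\delta'$ of some $a_i$) with $\delta'\le\delta$ to be pinned down later, and take these as the control-points with $y_i=Q_s(a_i)$.

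The core computation rewrites the error as a normalized, weighted combination of pointwise deviations,
\[
f(a)-Q_s(a)=\frac{\sum_i\bigl(y_i-Q_s(a)\bigr)\,w_i(a)}{\sum_i w_i(a)},
\qquad\text{so}\qquad
|f(a)-Q_s(a)|\le\sum_i|Q_s(a_i)-Q_s(a)|\,\frac{w_i(a)}{\sum_j w_j(a)} .
\]
I would then split the index set at a radius $\eta$ with $\delta'\le\eta\le\delta$: the ``near'' control-points ($|a-a_i|\le\eta$) contribute at most $\epsilon/2$ by uniform continuity, while the ``far'' ones contribute at most $M\cdot\bigl(\sum_{\mathrm{far}}w_i(a)\bigr)/\bigl(\sum_j w_j(a)\bigr)$. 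The remaining task is to drive this residual ratio below $\epsilon/(2M)$ by taking $\delta'$ small: lower-bound the denominator by the weight $w_{i^\star}(a)$ of the control-point nearest to $a$ (which lies within $\delta'$ of $a$) and upper-bound each far weight by a constant determined by $\eta$ and the kernel.

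The main obstacle is precisely this last ratio estimate. Refining the mesh shrinks $\delta'$ but also inflates $N$, while the single nearby weight $w_{i^\star}(a)=\bigl(|a-a_{i^\star}|+c_{i^\star}\max_k(y_k-y_{i^\star})\bigr)^{-1}$ need not grow, since the smoothing term $c_{i^\star}\max_k(y_k-y_{i^\star})$ can stay bounded away from $0$; so the crude comparison ``number of far points over $\eta$, versus one near weight'' does not close on its own. A careful argument must instead also credit the many near control-points that refinement introduces (so the near mass grows with $N$ too), or bound the far mass by summing weights over distance shells around $a$ as a geometric-type series; this is where one has to track how the kernel's decay, the smoothing constants $c_i$, and the geometry of $\mathcal{A}$ interact. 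Everything else — uniform continuity, existence of the finite net, and the algebra above — is routine.
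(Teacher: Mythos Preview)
Your outline is the same as the paper's: uniform continuity on compact $\mathcal{A}$ yields $\delta$; a finite $\delta$-net supplies the control-points with $y_i=Q_s(a_i)$; the error $f(a)-Q_s(a)$ is written as a $\lambda_i$-weighted average of $y_i-Q_s(a)$ and split into near ($|a-a_i|<\delta$) and far contributions; and the near part is bounded by $\epsilon/2$ via uniform continuity in both arguments.

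The divergence is entirely in the far term, and you have put your finger on the right difficulty. The paper's proof here is a one-liner: for each far index $j$ it bounds $w_j(a)/w_{i^\star}(a)\le 1+c_{\min}R/\delta=:\theta$ (with $R$ your $M$), asserts that therefore $\sum_{\mathrm{far}}\lambda_j(a)\le\theta^{-1}$, and then shrinks $\delta$ until $\theta^{-1}<\epsilon/(2R)$. It compares each far weight only against the single nearest control-point and never sums over shells or counts near points. Your concern is well placed: a per-index bound $w_j\le\theta\,w_{i^\star}$ with $\theta>1$ does not by itself control $\sum_{\mathrm{far}}\lambda_j$, because refining the net inflates the number of far indices while $w_{i^\star}$ stays bounded (the smoothing offset $c_{i^\star}(y_{\max}-y_{i^\star})$ need not vanish). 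The paper's implication at that step is not justified as written; the obstacle you describe is exactly the gap. Your two suggested repairs --- crediting the growing population of near control-points that refinement also creates, or bounding the far mass shell-by-shell --- are the standard ways to make Shepard-type estimates honest, and they do require matching the kernel's decay against the dimension of $\mathcal{A}$ (with the exponent-$1$ kernel in the statement this is already delicate for $\dim\mathcal{A}\ge 2$). So your plan coincides with the paper's, and your hesitation at the far-mass step is better calibrated than the paper's one-line passage through it.
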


\begin{proof}
We prove this proposition by following the classical convergence analysis of inverse–distance interpolation~\citep{franke1980smooth}, which was adapted to the control–point formulation for reinforcement
learning by~\citet{baird1993reinforcement}.

\paragraph{Uniform–continuity radius.}
Because $\mathcal A$ is compact and $Q_s$ is continuous,
$Q_s$ is uniformly continuous; hence there exists
$\delta>0$ such that
$\lvert Q_s(a)-Q_s(b)\rvert<\epsilon/2$
whenever $\lvert a-b\rvert<\delta$.

\paragraph{Control‐point $\delta$–net.}
Choose a finite $\delta$–net
$\{a_i\}_{i=1}^{N}\subset\mathcal A$,
i.e.\ for every $a\in\mathcal A$
there is an index $i^\star(a)$ with
$\lvert a-a_{i^\star}\rvert<\delta$.
Define the targets
$y_i:=Q_s(a_i)$ and the positive offsets
$c_i>0$ (a constant choice $c_i\equiv1$ suffices).

\paragraph{Interpolant as a convex combination.}
For each $a\in\mathcal A$ set
\[
w_i(a)=\frac{1}{\,\lvert a-a_i\rvert+c_i\,
        \Delta_i},\qquad
\Delta_i:=\max_{k}(y_k-y_i),
\quad
\lambda_i(a):=\frac{w_i(a)}{\sum_{k}w_k(a)}.
\]
Because all $w_i(a)>0$, the $\lambda_i(a)$ form a partition of
unity and the wire–fitting interpolant is
$f(a)=\sum_{i}\lambda_i(a)\,y_i$.

\paragraph{Error decomposition.}
Write
$f(a)-Q_s(a)=E_{\mathrm{near}}+E_{\mathrm{far}}$, where
\[
\begin{aligned}
E_{\mathrm{near}}
    &=\sum_{i:\,\lvert a-a_i\rvert<\delta}
        \lambda_i(a)\bigl(Q_s(a_i)-Q_s(a)\bigr),\\
E_{\mathrm{far}}
    &=\sum_{i:\,\lvert a-a_i\rvert\ge\delta}
        \lambda_i(a)\bigl(Q_s(a_i)-Q_s(a)\bigr).
\end{aligned}
\]

\emph{Near indices.}
For the “near’’ set the uniform–continuity condition yields
$\lvert Q_s(a_i)-Q_s(a)\rvert<\epsilon/2$; hence
$\lvert E_{\mathrm{near}}\rvert\le\epsilon/2$.

\emph{Far indices.}
Let $R:=\max_{\mathcal A}Q_s-\min_{\mathcal A}Q_s$ and
$c_{\min}:=\min_i c_i$.
For any $j$ with $\lvert a-a_j\rvert\ge\delta$,
\[
\frac{w_j(a)}{w_{i^\star}(a)}
   \le
   1+\frac{c_{\min}R}{\delta}
   =:\theta.
\]
It follows that
$\sum_{j:\lvert a-a_j\rvert\ge\delta}\lambda_j(a)\le\theta^{-1}$.
Choosing
$\delta<c_{\min}R\,\epsilon/(2R-\epsilon)$
makes $\theta^{-1}<\epsilon/(2R)$ so that
$\lvert E_{\mathrm{far}}\rvert<R\,\theta^{-1}<\epsilon/2$.

\paragraph{Uniform bound.}
Combining the two parts gives
$\lvert f(a)-Q_s(a)\rvert<\epsilon$
for every $a\in\mathcal A$; hence
$\lVert f-Q_s\rVert_\infty<\epsilon$.
\end{proof}

\section{Implementation Details and Ablations for Q3C}\label{sec:extra_ablations}

\subsection{control-point Diversification Functions}
In addition to the separation loss we defined in Eq. (6), we experiment with an alternative loss formulation that explicitly encourages each control-point to maintain a minimum distance from its nearest neighbor:
\begin{align*}
\mathcal{L}_{\text{min\_N}}(\phi)
=\frac{1}{N} \sum_{i=1}^{N} \min_{\substack{1 \le j \le N \\ j \neq i}}
\| x_i - x_j \|_{2}
\end{align*} 
This variant is used for a subset of tasks—specifically, Hopper and BipedalWalker—where we empirically observed improved performance. These similar objectives aim to enhance the expressivity and robustness of the Q-function approximation by avoiding clustered or redundant control-points.

\subsection{Smoothing Coefficient}\label{sec:smoothing_justification}
We experimented with three primary methods of setting the smoothing parameter $c$: tuning the constant value of smoothing as a hyperparameter, learning the smoothing coefficient, and scheduling a decay for smoothing.

We find in Figure~\ref{fig:smoothing_ablation} that an exponentially decaying smoothing scheduler typically works better than learning a smoothing parameter. Utilizing a scheduler induces a greater stability in the smoothing parameter that allows the model to learn the Q-function and distribution of control-points more effectively. The learning process becomes less susceptible to erratic shifts in the smoothing parameter.

\begin{figure}[ht]
    \centering
    \includegraphics[width=\linewidth]{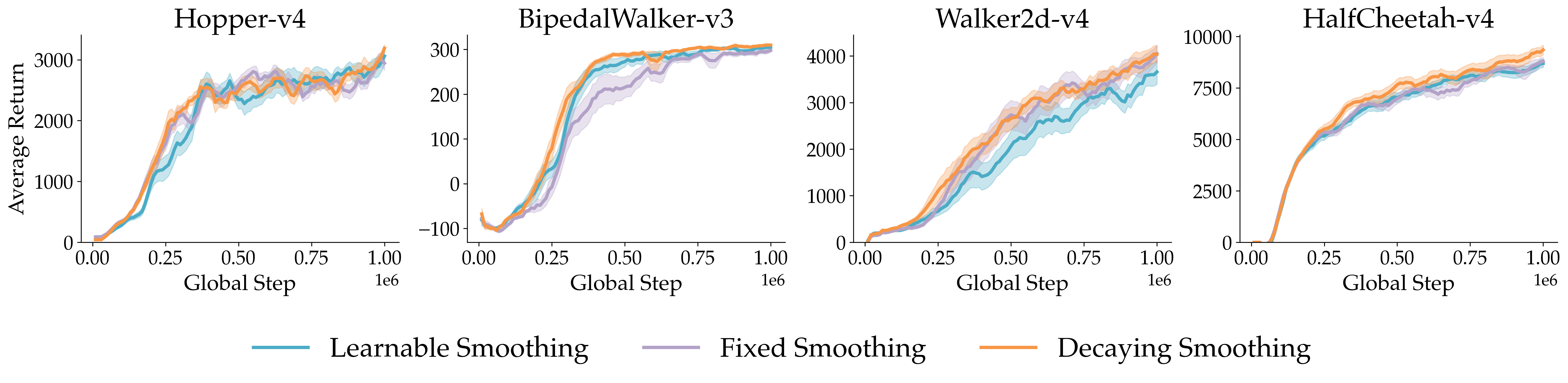}
    \caption{Ablations of Q3C with different strategies for smoothing parameter: The method of decaying smoothing is consistently more stable as justified in section \ref{sec:smoothing_justification}.}
    \label{fig:smoothing_ablation}
\end{figure}

\subsection{Learning Rate Schedulers}
In traditional actor-critic architectures like TD3, there is often an in-built policy delay with the critic being updated more frequently than the actor. However, since Q3C employs the same network to serve as both actor and critic, our implicit policy is updated at the same rate as our critic. Thus, Q3C is more sensitive to various learning schemes to stabilize training. 

We tried six different learning rate schedulers: constant, linear decay, exponential decay, inverse exponential decay, cosine decay, and one-cycle learning rate. For each scheduler except for the constant learning schedule, we kept the maximum learning rate as the learning rate specified in the hyperparameters and the minimum learning rate as 10\% of the maximum learning rate. 

Among these, we found that inverse exponential decay and cosine decay were the optimal learning schedulers across all environments. For sake of consistency, we apply a delayed exponential decay learning scheduler for all results with the final learning rate set to 10\% of the initial learning rate. 

\subsection{control-points with K-Nearest Neighbors}

We experimented across various control-point combinations of $N$ (total \# of control-points) and $k$ (top $k$ rankings), and found that setting $N=20$ and $k=10$ consistently worked best for nearly all environments. We utilize this setting as a starting point for hyperparameter tuning, as it balances coverage and computational cost. We then progressively increase the value of $N$, keeping $k$ relatively constant, until learning is optimal. We employ $N=30$ and $k=15$ for Ant-v4, and $N=70$ and $k=10$ for Adroit environment. However, our algorithm is reasonably robust across different control-point combinations, as shown in Figure~\ref{fig:cp_ablation}.

As a general rule, increasing the number of control-points $N$ allows for greater expressivity but involves learning a more complex Q-function, which may slow down learning. Due to this tradeoff, the optimal number of control-points $N$ tends to scale more conservatively than a proportional increase with action dimensionality (e.g., AdroitHandHammer-v1's action space is 26-dim but only requires $70$ control-points). Increasing the value of $k$ typically leads to a smoother and continuous Q-function across the action space, whereas a smaller value of $k$ allows for more granular and piecewise control of the Q-function. 

Additionally, the conditional Q architecture enables parallelization across control-points during Q-value generation, ensuring that the parameter count does not scale linearly with $N$. This parallel structure supports Q3C’s scalability to high-dimensional and non-convex Q-function landscapes.

\begin{figure}[htb]
    \centering
    \includegraphics[width=\linewidth]{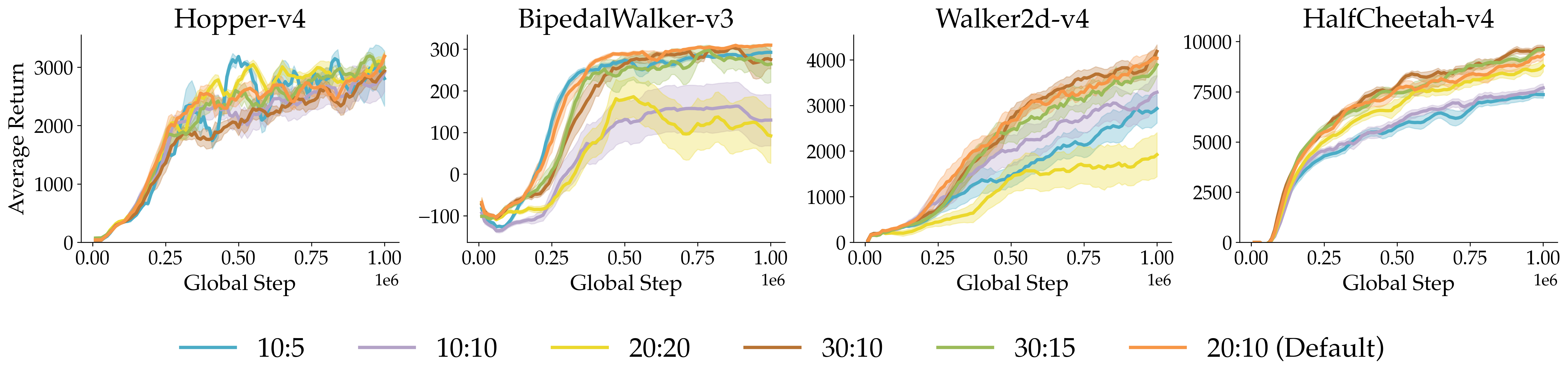}
    \caption{Ablations of Q3C with different configurations for \# of control-points (N), and top k ranking. Different configurations are represented as N:k in the legend.}
    \label{fig:cp_ablation}
\end{figure}

\section{Further Experiment Results}
\begin{wrapfigure}{r}{0.33\linewidth}  %
    \centering
    \vspace{-30pt}
    \includegraphics[width=\linewidth]{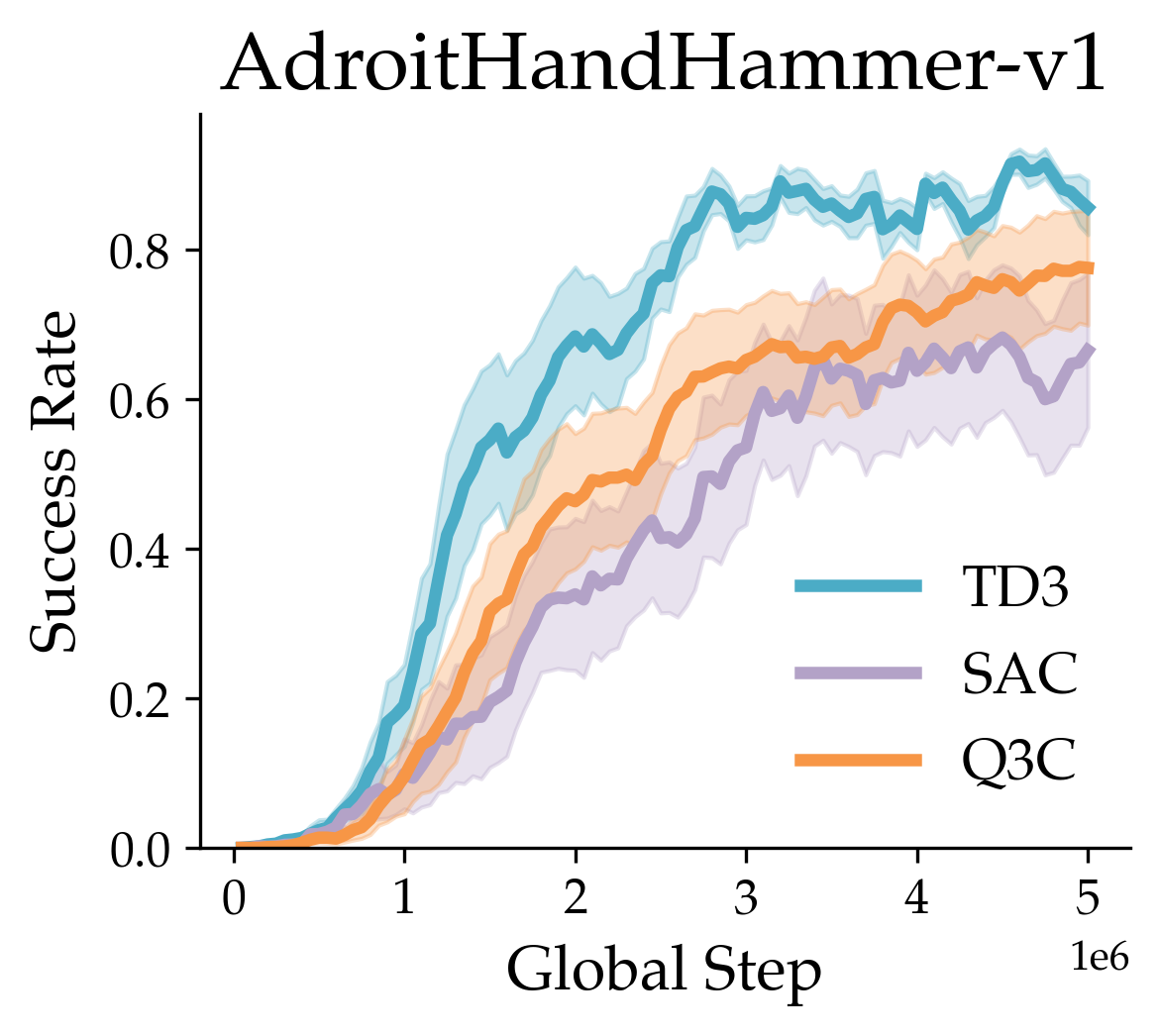}
    \vspace{-20pt}
    \caption{Success rate plot of Q3C against TD3 and SAC on High-dimensional environments.}
    \label{fig:adroit_results}
    \vspace{-50pt}
\end{wrapfigure}

In addition to ablations of the core method modifications, we have provided extra results to demonstrate the effect of the specific implementation details listed above. 

\subsection{Q3C's Performance in High Dimensional Action Spaces}

We have conducted preliminary experiments on Adroit Hand Hammer environment \cite{rajeswaran2017learning} to evaluate Q3C's performance in high-dimensional action spaces (26-dim). As seen in \autoref{fig:adroit_results}, Q3C is able to match TD3, while surpassing SAC, which shows that Q3C is able to scale its learning to high-dimensional action spaces.

\subsection{Comparison Against SAC \& PPO}
\label{supp:sac_ppo}
In addition to the baselines reported in the main paper, we also compare Q3C with two widely used algorithms, SAC and PPO \citep{schulman2017ppo, haarnoja2018soft}. The results are shown in \autoref{fig:rebuttal} and \autoref{fig:rebuttal_restricted}, alongside TD3. PPO is an on-policy method and is therefore typically less sample-efficient than off-policy approaches. SAC, on the other hand, employs a different exploration scheme based on soft Q-value maximization, which differs fundamentally from the noise-based exploration used by the baselines in the main paper. Because of these differences, SAC and PPO are not directly comparable for evaluating the specific contribution of Q3C as a "structurally maximizable Q-function". We therefore do not expect, nor claim, that Q3C should outperform these methods across all settings. Nevertheless, Q3C achieves competitive results in all standard environments and consistently matches or surpasses state-of-the-art performance in the restricted environments.

\begin{figure}[htb!]
    \centering
    \includegraphics[width=\linewidth]{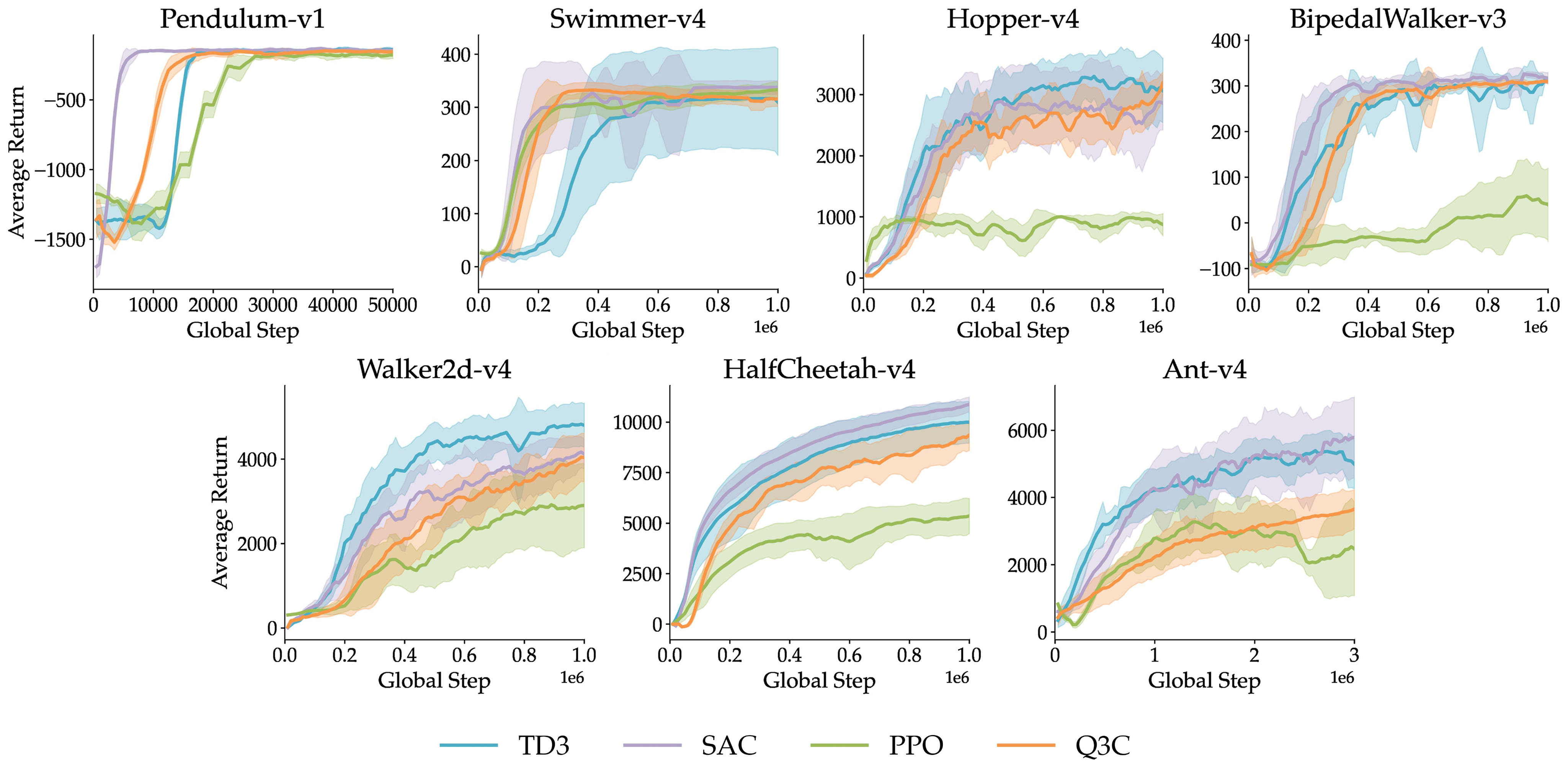}
    \caption{Comparison against TD3, SAC, and PPO on Standard Environments.}
    \label{fig:rebuttal}
\end{figure}

\begin{figure}[htb!]
    \centering
    \includegraphics[width=0.85\linewidth]{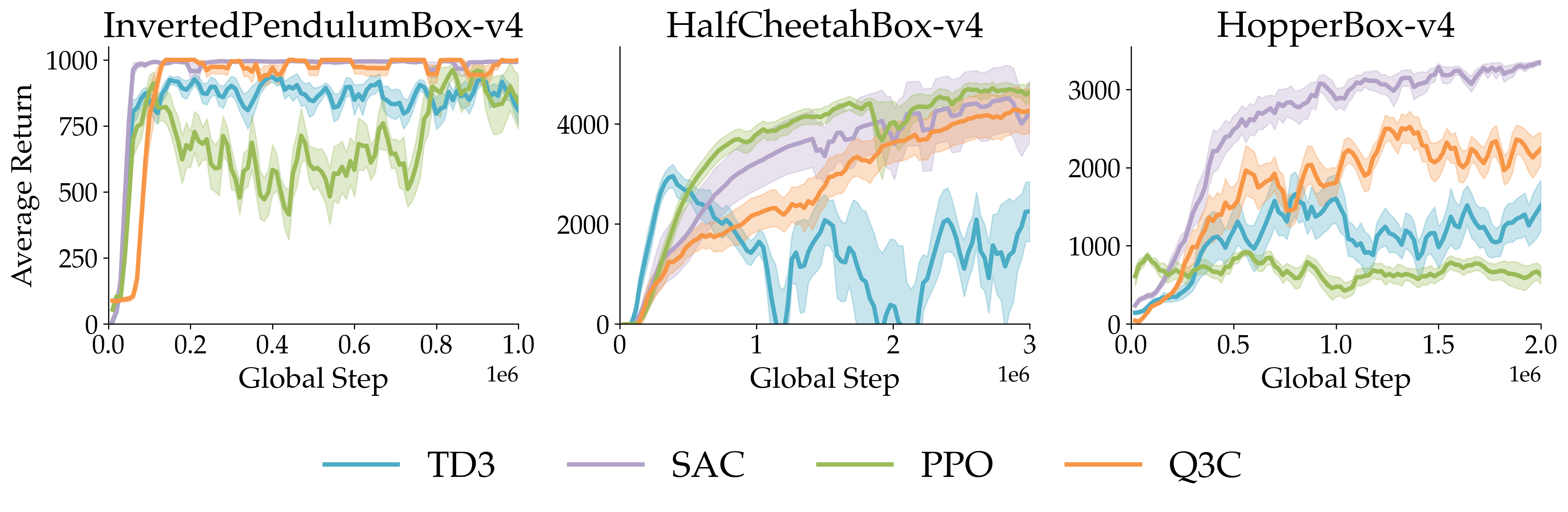}
    \caption{Comparison against TD3, SAC, and PPO on Restricted Environments.}
    \label{fig:rebuttal_restricted}
\end{figure}

\subsection{Wall-clock Time and Memory Footprint of Q3C}

We conducted detailed comparisons of wall-clock time and memory usage between Q3C, SAC, PPO, and TD3 on a subset of evaluation environments. Experiments were run with 5 random seeds each: restricted environments on Tesla P100 GPUs and unrestricted environments on A100 GPUs. To ensure fair comparisons, we defined a reward threshold for each environment (Hopper: 3000, BipedalWalker: 300, HalfCheetah: 9000, InvertedPendulumBox: 1000, HalfCheetahBox: 4000, HopperBox: 2000) corresponding to what is generally considered optimal performance. We then recorded the wall-clock time required by each algorithm to reach and stabilize around the threshold. Algorithms marked as “N/A” did not reach or sustain the target reward within the full training duration. Results are summarized in \autoref{tab:wallclock}.

\begin{table}[ht]
\centering
\caption{Wall-clock time comparison of different algorithms across environments.}
\label{tab:wallclock}
\resizebox{\textwidth}{!}{%
\begin{tabular}{lcccccc}
\toprule
Algorithm & Hopper & BipedalWalker & HalfCheetah & InvertedPendulumBox & HalfCheetahBox & HopperBox \\
\midrule
Q3C & 58.3 min & \textbf{44.6 min} & 61.5 min & \textbf{12.43 min} & 4 hr & \textbf{0.7 hr} \\
TD3 & \textbf{32.9 min} & 45.6 min & \textbf{39.2 min} & N/A & N/A & N/A \\
SAC & 76.3 min & 50.5 min & 69.2 min & 16.5 min & 6.6 hr & 1.3 hr \\
PPO & N/A & N/A & N/A & N/A & \textbf{1.0 hr} & N/A \\
NAF & N/A & N/A & N/A & N/A & 7.2 hr & N/A \\
\bottomrule
\end{tabular}
}
\end{table}

\begin{table}[ht]
\centering
\caption{Comparison of PyTorch memory allocation across algorithms.}
\label{tab:memory_allocation}
\begin{tabular}{lc}
\toprule
Algorithm & PyTorch Allocated (MB) \\
\midrule
Q3C  & 26.58 \\
TD3  & 23.51 \\
SAC  & 20.01 \\
PPO  & 18.38 \\
\bottomrule
\end{tabular}
\end{table}

In HopperBox and InvertedPendulumBox environments, Q3C consistently achieves the target reward in significantly less time than all baselines. For instance, in InvertedPendulumBox, it reaches optimal performance in about 60\% of the time required by SAC, while other baselines fail to converge. In terms of memory usage, although Q3C utilizes a parallelized evaluation of Q-values of all the control-points, the increase in memory footprint is marginal and does not hinder training.

From a runtime perspective, Q3C removes the actor network, which reduces computational overhead. However, this advantage is partially offset by the additional cost introduced by the interpolation mechanism. Since we have used a simple interpolation module, its overhead cancels out the time savings from removing the actor. We consider this an engineering limitation rather than a conceptual one: with better parallelization and optimized implementations, Q3C could realize significant runtime advantages.

More importantly, from a memory perspective—particularly in large-scale settings—Q3C demonstrates clear advantages over TD3. We analyzed memory usage across increasing network sizes while ensuring that the critic parameter count remained equivalent between Q3C and TD3, isolating the effect of the actor. As shown in \autoref{tab:scaling_comparison}, we observe increasing memory savings for Q3C as network size scales.

\begin{table}[ht]
\centering
\caption{Parameter counts and memory usage of Q3C and TD3 across model scales.}
\label{tab:scaling_comparison}
\resizebox{\textwidth}{!}{%
\begin{tabular}{lccccc}
\toprule
Model scale* & Q3C params & TD3 params & Q3C memory (MB) & TD3 memory (MB) & \% Memory saved \\
\midrule
Small    & 37M   & 52M    & 341   & 400    & 14.8\% \\
Medium   & 135M  & 226M   & 707   & 1,057  & 33.1\% \\
Large    & 538M  & 906M   & 2,245 & 3,675  & 38.9\% \\
X-Large  & 1.57B & 2.65B  & 6,212 & 10,348 & 40.0\% \\
\bottomrule
\end{tabular}
}
\vspace{1mm}
\begin{minipage}{\textwidth}
\footnotesize
\textsuperscript{*}Mapping of Layer Sizes: Small $\rightarrow$ 1,024 control-points, Medium $\rightarrow$ 2,048, Large $\rightarrow$ 4,096, X-Large $\rightarrow$ 7,000.
\end{minipage}
\end{table}

\subsection{Augmenting Q3C with Cross-Q}

\begin{wrapfigure}{r}{0.33\linewidth}  %
    \centering
    \includegraphics[width=\linewidth]{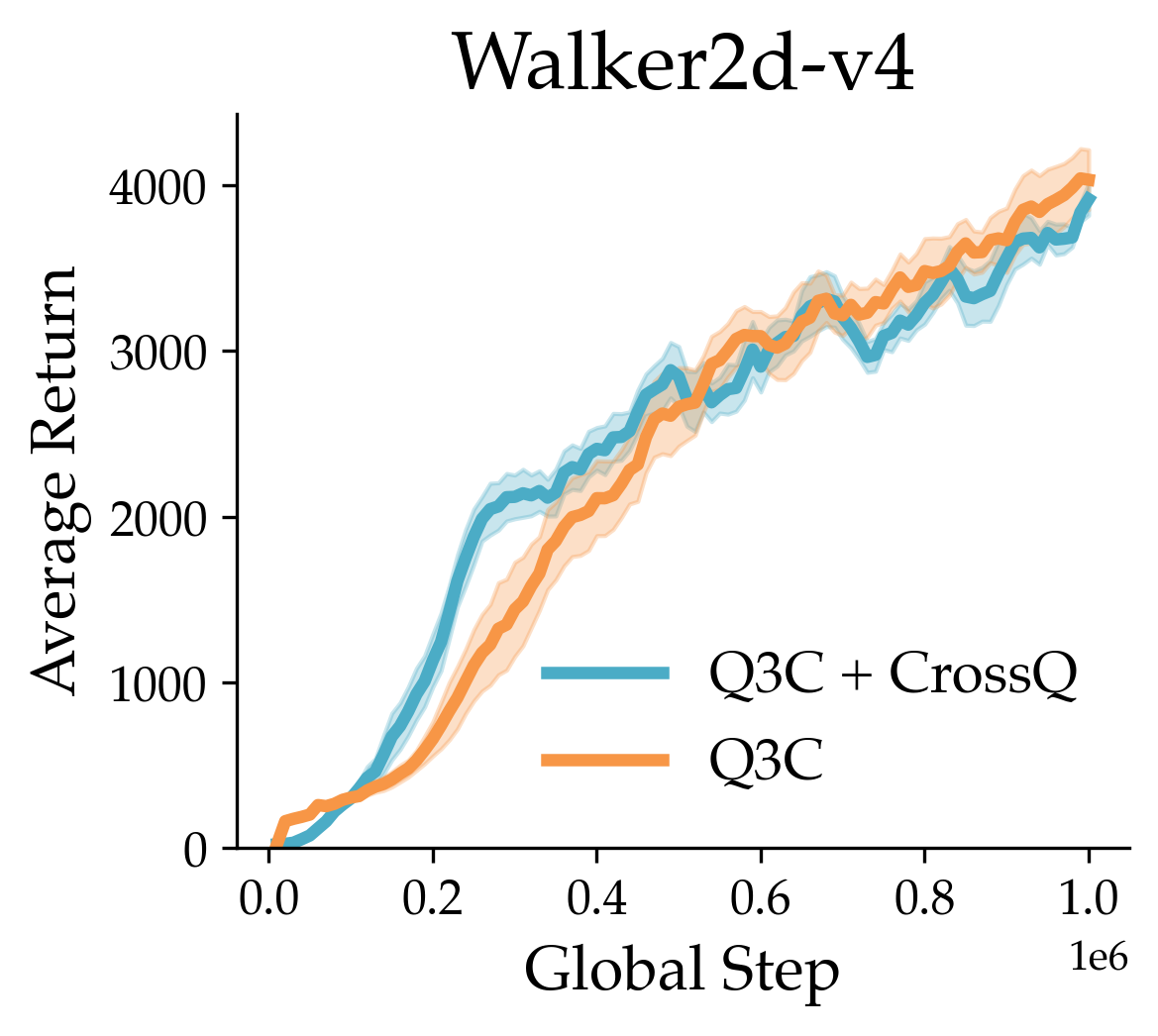}
    \caption{Comparison of Q3C with Q3C + Cross Q style modifications}
    \label{fig:crossq}
\end{wrapfigure}
While Q3C generally converges to competitive final performance, we observe that it occasionally lags behind other baselines in terms of sample efficiency. Recent work by \citet{bhatt2024crossq} addresses this issue in actor-critic settings by removing the target network and introducing batch normalization in the critic (and optionally in the actor), resulting in improved sample efficiency. Motivated by this, we also tried adopting a similar modification into our framework. Specifically, we removed the target network and applied batch normalization within the Q-value generation network. We evaluated this variant on the Walker2d task. Our preliminary results indicate that this approach improves sample efficiency during the early stages of training; however, the final performance remains comparable to the original Q3C. Improving sample efficiency remains an important direction for future work. We plan to further explore architectural and algorithmic modifications to Q3C that can accelerate learning while preserving the stability and performance of the full model.

\section{Environment Details}
\label{supp:environment}
We test Q3C and our baselines across 7 Gymnasium environments~\citep{towers2024gymnasium, todorov2012mujoco} and 4 custom restricted environments \citep{jain2025mitigating}. The restricted environments are variations of their corresponding standard versions where the action spaces are limited to a series of hyperspheres within the region to create complex Q-function landscapes. The traditional control environments are described in Table~\ref{tab:env_summaries} and Figure~\ref{fig:envs}.

\begin{figure}[htb]
    \centering
    \includegraphics[width=0.7\linewidth]{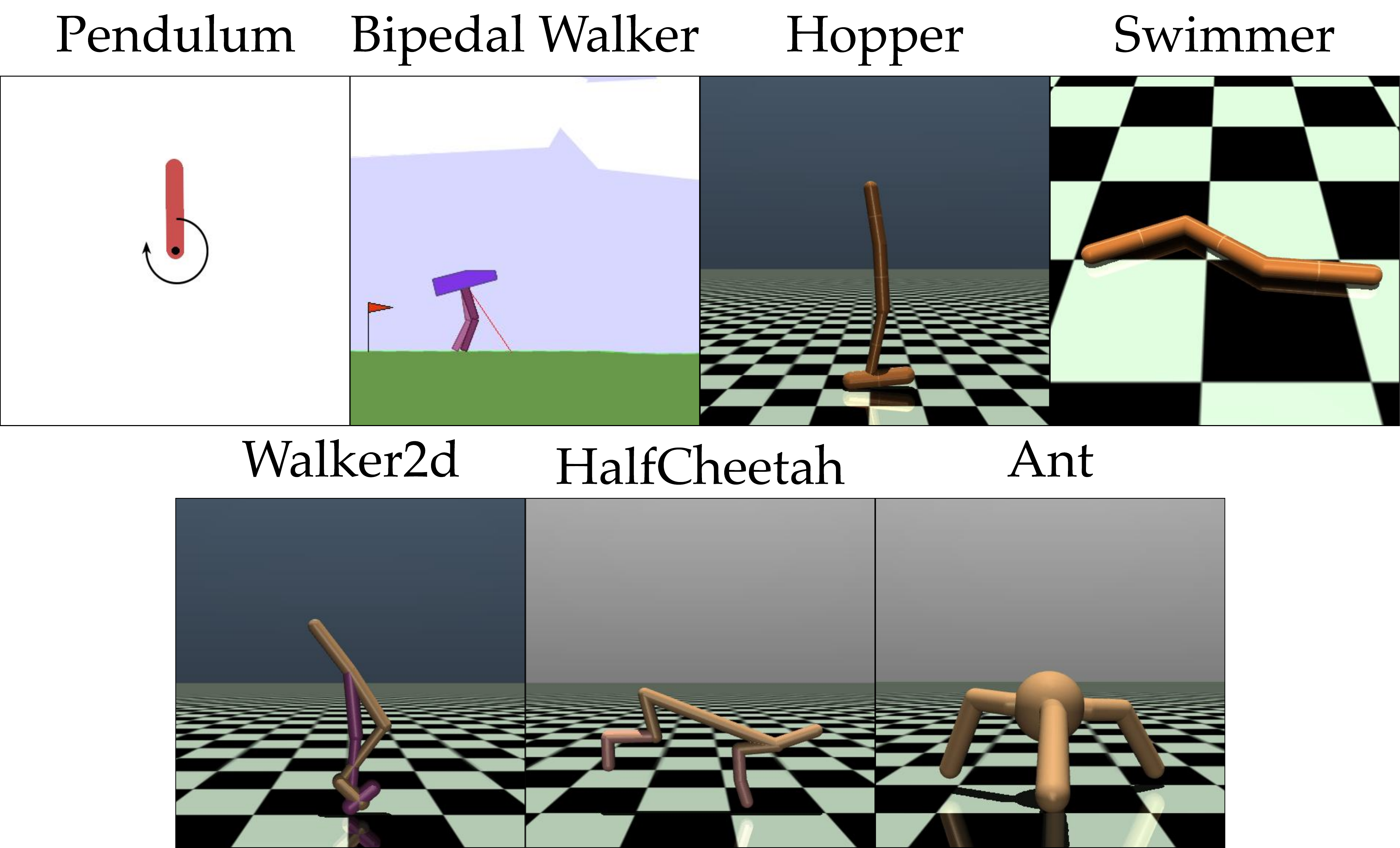}
    \caption{Visualizations of the testing environments.}
    \label{fig:envs}
\end{figure}

\begin{table}[htbp]
\centering
\caption{Environment Summaries with Observation and Action Spaces}
\label{tab:env_summaries}
\resizebox{\textwidth}{!}{%
\begin{tabular}{@{}p{4cm}p{7.5cm}p{5cm}@{}}
\toprule
\textbf{Environment} & \textbf{Summary} & \textbf{Observation / Action Space} \\
\midrule
Pendulum-v1 & A pendulum starts at a random angle and must be swung up and balanced upright using torque. & 3D obs   /   1D cont act \\
Hopper-v4 & A one-legged robot must learn to hop forward without falling. & 11D obs  /  3D cont act \\
HalfCheetah-v4 & A 2D bipedal robot with a flexible spine must learn to run forward. & 17D obs  /  6D cont act \\
BipedalWalker-v4 & A two-legged robot must learn to walk across rough terrain. & 24D obs  /  4D cont act \\
Swimmer-v4 & A 2D snake-like robot must swim forward in a viscous fluid. & 8D obs   / 2D cont act \\
Walker2d-v4 & A 2D robot with two legs must learn to walk forward while keeping balance. & 17D obs  /  6D cont act \\
Ant-v4 & A four-legged robot (quadruped) must learn to move forward stably in 3D. & 111D obs / 8D cont act \\
\bottomrule
\end{tabular}
}
\end{table}

\subsection{Restricted Environments}
The restricted environments are derived directly from their standard MuJoCo counterparts. The observation and action dimensions, environment goals, and transition dynamics remain identical between the standard and restricted versions. The only modification lies in the admissible action space. However, as demonstrated in Figure \ref{fig:restricted_q_funcs}, this modification alone is sufficient to introduce significant irregularities into the optimal Q-function.

Following prior work, the restricted action space is constructed as a union of hyperspheres embedded in the original action space. The volume of these hyperspheres is scaled by a coefficient $\kappa$, which determines the degree of restriction. A larger $\kappa$ corresponds to larger hyperspheres and thus a less constrained action space, while a smaller $\kappa$ induces tighter restrictions by shrinking the valid regions. For some environments, we lower the coefficient $\kappa$ from the original implementation to achieve an even more restricted action space, as detailed in Table~\ref{tab:box_multipliers}. 

\begin{table}[htb!]
\centering
\caption{Action Hypershere Volume Multiplier}
\label{tab:box_multipliers}
\begin{tabular}{lcccc}
\toprule
 & \textbf{InvertedPendulumBox} & \textbf{HopperBox} & \textbf{HalfCheetahBox} & \textbf{Walker2dBox} \\
\midrule
Default $\kappa$ & 3.0 & 1.65 & 1.0 & 1.0 \\
Q3C $\kappa$ & 3.0 & 1.25 & 0.25 & 0.25 \\
\bottomrule
\end{tabular}
\end{table}

\begin{figure}[htb]
    \centering
    \includegraphics[width=\linewidth, trim=20 0 0 0, clip]{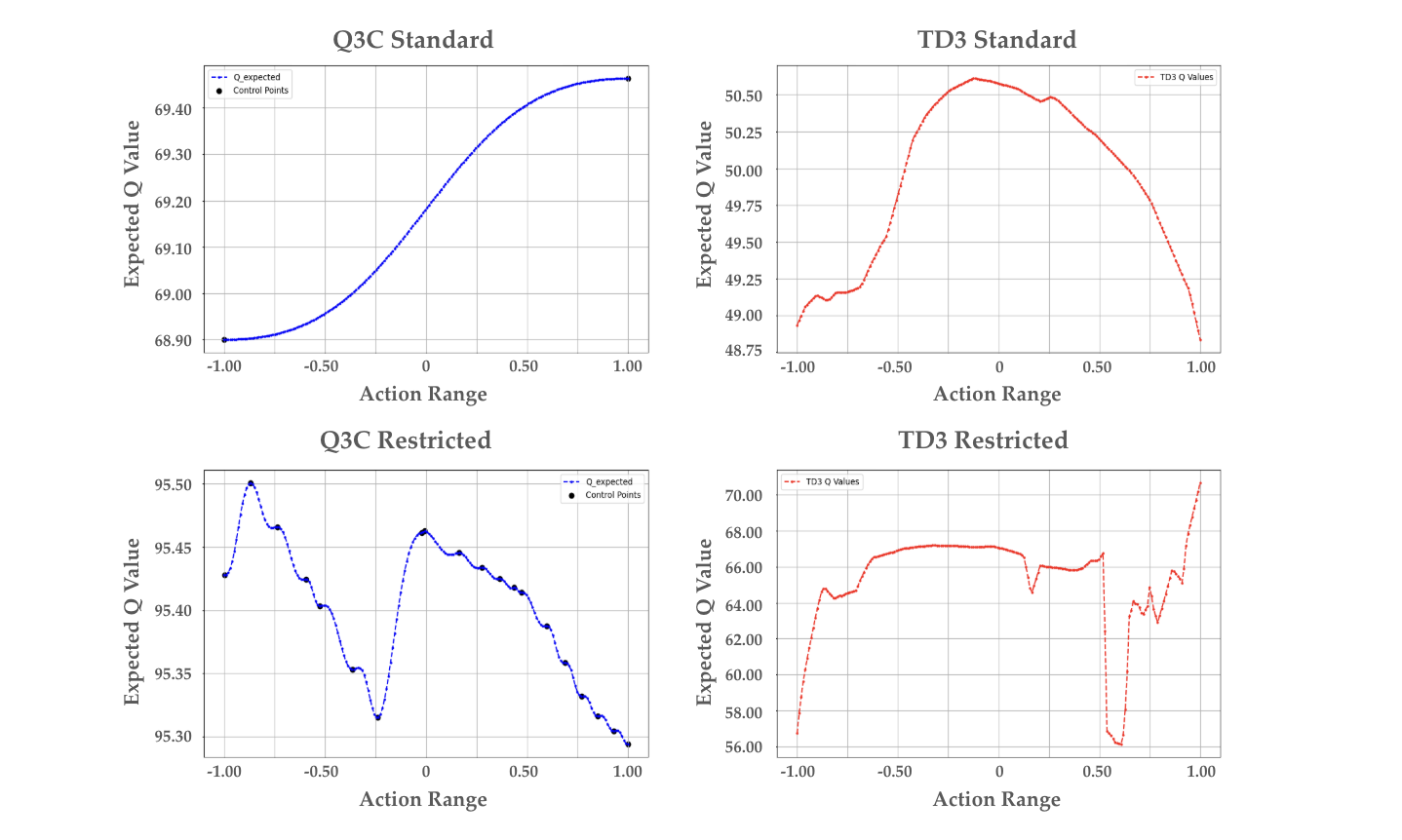}
    \caption{The Learned Q-Functions for TD3 and Q3C are notably more complex and have more local optima on the restricted version of InvertedPendulum as compared to the standard Mujoco version.}
    \label{fig:restricted_q_funcs}
\end{figure}

\section{Implementation Details and Hyperparameters}
\subsection{Baselines}
For TD3, SAC, and PPO we have used the official implementations from Stable-Baselines3\footnote{\url{https://github.com/DLR-RM/stable-baselines3/blob/master/stable_baselines3}} with the optimized hyperparameters taken from RL Zoo\footnote{\url{https://github.com/DLR-RM/rl-baselines3-zoo/tree/master/hyperparams}}. For NAF and RBF-DQN, we have followed the original publications and used the official codebase and hyperparameters, where available. If the hyperparameters are not available, we performed our own hyperparameter tuning. While we made a strong effort to identify competitive configurations, we acknowledge that better parameter combinations may exist.

\subsection{Q3C Hyperparameters}

Q3C adopts its hyperparameters from its underlying implementation of TD3, but we tune certain important hyperparameters such as learning rate and learning starts. Furthermore, we tune the hyperparameters specific to Q3C such as the number of control-points, the number of nearest neighbors $k$, separation loss weight, and initial smoothing value.

\label{sec:hyperparameters}
\newpage
\subsubsection{Classic Environments}
\begin{table}[htbp!]
\centering
\caption{Q3C Hyperparameters (Part 1 of 2)}
\label{tab:model_hyperparams_1}
\begin{tabular}{lcccc}
\toprule
\textbf{Parameter} & \textbf{Pendulum} & \textbf{Hopper} & \textbf{BipedalWalker} & \textbf{Ant} \\
\midrule
Total timesteps & 1e5 & 1e6 & 1e6 & 3e6 \\
Discount factor ($\gamma$) & 0.98 & 0.99 & 0.98 & 0.99 \\
Batch size & 64 & 256 & 256 & 256 \\
Buffer size & 2e5 & 1e5 & 2e5 & 5e5 \\
Learning starts & 1000 & 30000 & 10000 & 10000 \\
Target network update ($\tau$) & 0.005 & 0.005 & 0.005 & 0.005 \\
Noise type & Gaussian & Gaussian & Gaussian & Gaussian \\
Noise std & 0.1 & 0.1 & 0.1 & 0.1 \\
Learning rate & 1e-3 & 1e-3 & 3e-4 & 5e-4 \\
Network dim & [400,300] & [400,300] & [400,300] & [400,300] \\
Target update interval & 4 & 4 & 1 & 4 \\
Gradient clipping & 10.0 & 10.0 & 1.0 & 5.0 \\
\# of control-points & 3 & 20 & 20 & 30 \\
k & 3 & 10 & 10 & 15 \\
Separation loss weight & 0.01 & 0.01 & 1.0 & 0.1 \\
Initial smoothing value & 0.1 & 0.01 & 0.001 & 0.001\\
\bottomrule
\end{tabular}
\end{table}

\begin{table}[htbp!]
\centering
\caption{Q3C Hyperparameters (Part 2 of 2)}
\label{tab:model_hyperparams_2}
\begin{tabular}{lccc}
\toprule
\textbf{Parameter} & \textbf{HalfCheetah} & \textbf{Swimmer} & \textbf{Walker2d} \\
\midrule
Total timesteps & 1e6 & 1e6 & 1e6 \\
Discount factor ($\gamma$) & 0.99 & 0.9999 & 0.99 \\
Batch size & 256 & 256 & 256 \\
Buffer size & 1e5 & 5e4 & 1e5 \\
Learning starts & 30000 & 10000 & 10000 \\
Target network update ($\tau$) & 0.005 & 0.005 & 0.005 \\
Noise type & Gaussian & Gaussian & Gaussian \\
Noise std & 0.1 & 0.1 & 0.1 \\
Learning rate & 1e-3 & 1e-3 & 1e-3 \\
Network dim & [400,300] & [400,300] & [400,300] \\
Target update interval & 4 & 4 & 4 \\
Gradient clipping & 10.0 & 10.0 & 10.0 \\
\# of control-points & 20 & 20 & 20 \\
k & 10 & 10 & 10 \\
Separation loss weight & 1.0 & 1.0 & 0.1 \\
Initial smoothing value & 0.01 & 0.1 & 1\\
\bottomrule
\end{tabular}
\end{table}

\clearpage

\subsubsection{Restricted Environments}
\begin{table}[htbp!]
\centering
\caption{Q3C Hyperparameters for Restricted Environments}
\label{tab:box_hyperparams}
\resizebox{\textwidth}{!}{%
\begin{tabular}{lcccc}
\toprule
\textbf{Parameter} & \textbf{InvertedPendulumBox} & \textbf{HopperBox} & \textbf{HalfCheetahBox}  \\
\midrule
Total timesteps & 1e6 & 2e6 & 3e6 \\
Discount factor ($\gamma$) & 0.99 & 0.99 & 0.99 \\
Batch size & 256 & 256 & 256 \\
Buffer size & 2e5 & 1e5 & 1e6 \\
Learning starts & 1000 & 30000 & 30000 \\
Target network update ($\tau$) & 0.005 & 0.005 & 0.005 \\
Noise type & Gaussian & Gaussian & Gaussian \\
Noise std & 0.1 & 0.1 & 0.1 \\
Learning rate & 1e-3 & 1e-3 & 3e-4 \\
Network dim & [400,300] & [400,300] & [400,300] \\
Target update interval & 4 & 4 & 4 \\
Gradient clipping & 10.0 & 10.0 & 10.0 \\
\# of control-points & 3 & 20 & 30 \\
k & 3 & 10 & 10 \\
Separation loss weight & 0.01 & 0.1 & 1.0 \\
Initial smoothing value & 0.1\textsuperscript{*} & 1.0 & 0.000001\textsuperscript{*} \\
\bottomrule
\end{tabular}
}
\vspace{1mm}

\raggedright
\textsuperscript{*}Smoothing value is fixed throughout training
\end{table}

\end{document}